\newcommand{\com}[1]{{\color{red}#1}} \else \newcommand{\com}[1]{} \fi
\newcommand{\dtl}[1]{{\color{red}Details: #1}} \else \newcommand{\dtl}[1]{} \fi
\newcommand{\reva}[1]{{\color{blue}#1}} \else \newcommand{\reva}[1]{#1} \fi
\newcites{appendix}{Additional References}
\DeclareMathOperator{\argmin}{argmin}
\DeclareMathOperator{\argmax}{argmax}
\newcommand{\Ex}{\mathbb{E}}
\newcommand{\In}{\mathbb{I}}
\renewcommand{\Pr}{\mathbb{P}}
\newtheorem{assumption}{Assumption}
\newtheorem{fact}{Fact}
\newtheorem{lemma}{Lemma}
\newtheorem{theorem}{Theorem}
\title{Thompson Sampling for Combinatorial Network Optimization in Unknown Environments}
\author{
	\IEEEauthorblockN{Alihan H\"uy\"uk, Cem Tekin,~\IEEEmembership{Senior Member,~IEEE}}
	\thanks{\copyright 2020 IEEE.  Personal use of this material is permitted.  Permission from IEEE must be obtained for all other uses, in any current or future media, including reprinting/republishing this material for advertising or promotional purposes, creating new collective works, for resale or redistribution to servers or lists, or reuse of any copyrighted component of this work in other works.}
	\thanks{A. H\"uy\"uk was with the Department of Electrical and Electronics Engineering, Bilkent University, Ankara 06800, Turkey. He is now with the Department of Applied Mathematics and Theoretical Physics, University of Cambridge, Cambridge CB3 0WA, UK (e-mail: ah2075@cam.ac.uk).}
    \thanks{C. Tekin is with the Department of Electrical and Electronics Engineering, Bilkent University, Ankara 06830, Turkey (e-mail: cemtekin@ee.bilkent.edu.tr).}
	\thanks{This work was supported in part by the Scientific and Technological Research Council of Turkey under Grant 215E342.}
	\thanks{A preliminary version of this work was presented in AISTATS 2019 \cite{huyuk2019analysis}.}
}
\begin{document}

\maketitle
\begin{abstract}
Influence maximization, adaptive routing, and dynamic spectrum allocation all require choosing the right action from a large set of alternatives. Thanks to the advances in combinatorial optimization, these and many similar problems can be efficiently solved given an environment with known stochasticity. In this paper, we take this one step further and focus on combinatorial optimization in unknown environments. We consider a very general learning framework called combinatorial multi-armed bandit with probabilistically triggered arms and a very powerful Bayesian algorithm called Combinatorial Thompson Sampling (CTS). Under the semi-bandit feedback model and assuming access to an oracle without knowing the expected base arm outcomes beforehand, we show that when the expected reward is Lipschitz continuous in the expected base arm outcomes CTS achieves $O(\sum_{i =1}^m\log T/(p_i\Delta_i))$ regret and $O(\max\{\Ex[m\sqrt{T\log T/p^*}],\Ex[m^2/p^*]\})$ Bayesian regret, where $m$ denotes the number of base arms, $p_i$ and $\Delta_i$ denote the minimum non-zero triggering probability and the minimum suboptimality gap of base arm $i$ respectively, $T$ denotes the time horizon, and $p^*$ denotes the overall minimum non-zero triggering probability. We also show that when the expected reward satisfies the triggering probability modulated Lipschitz continuity, CTS achieves $O(\max\{m\sqrt{T\log T},m^2\})$ Bayesian regret, and when triggering probabilities are non-zero for all base arms, CTS achieves $O(1/p^*\log(1/p^*))$ regret independent of the time horizon. Finally, we numerically compare CTS with algorithms based on upper confidence bounds in several networking problems and show that CTS outperforms these algorithms by at least an order of magnitude in majority of the cases.
\end{abstract}
\begin{IEEEkeywords}
Combinatorial network optimization, multi-armed bandits, Thompson sampling, regret bounds, online learning.	
\end{IEEEkeywords}

\section{Introduction}
\label{sec:introduction}
How should an advertiser promote its products in a social network to reach to a large set of users with a limited budget \cite{dinh2014cost,tong2017adaptive}? How should a search engine suggest a ranked list of items to its users to maximize the click-through rate \cite{kveton2015cascading}? How should a base station allocate its users to channels to maximize the system throughput \cite{gai2012combinatorial}? How should a mobile crowdsourcing platform dynamically assign available tasks to its workers to maximize the performance \cite{nee2018context}? How can we identify the most reliable paths from source to destination under probabilistic link failures \cite{lee2010diverse}? All of these problems require optimizing decisions among a vast set of alternatives. When the probabilistic description of the environment is fully specified, these problems---and many others---are solved using computationally efficient exact or approximation algorithms. In this paper, we focus on a much more difficult and realistic problem: How should we learn the optimal decisions in these complex problems via repeated interaction with the environment when the probabilistic description of the environment is unknown or only partially known? 

It is natural to assume that the environment is unknown in many real-world applications. For instance, the advertiser may not know with what probability user $i$ will influence its neighbor $j$ in a social network or the search engine may not know with what probability user $i$ will click the item shown on position $j$ beforehand. Moreover, decisions are need to be made sequentially over time. For instance, the recommender system should show a new list of items to each arriving user and the base station should reallocate network resources when the channel conditions change or the users leave/enter the system. Obviously, future decisions of the learner must be guided based on what it has observed thus far, i.e., the trajectory of actions, observations and rewards generated by the learner's past decisions. Importantly, both the cumulative reward of the learner and what it has learned so far also depend on this trajectory. Therefore, the learner needs to balance how much it earns (by exploiting the actions it believes to be the best) and how much it learns (by exploring actions it does not know much about) in order to maximize its long-term performance. In this paper, we solve the formidable task of combinatorial optimization in unknown environments by modeling it as a combinatorial multi-armed bandit (MAB). 

MAB problems have a long history as they exhibit the prime example of the tradeoff between exploration and exploitation \cite{gai2012combinatorial,robbins1952some}. In the classical MAB, at each round the learner selects an arm (action) which yields a random reward that comes from an unknown distribution. The goal of the learner is to maximize its expected cumulative reward over all rounds by learning to select arms that yield high rewards. The learner's performance is measured by its regret with respect to an oracle that always selects the arm with the highest expected reward. It is shown that when the arms' rewards are independent, any uniformly good policy will incur at least logarithmic in time regret \cite{lai1985asymptotically}. 

Several classes of policies are proposed for the learner to minimize its regret. One example is Thompson sampling \cite{thompson1933likelihood,agrawal2012analysis,russo2014learning}, which is a Bayesian method. In this method, the learner keeps a posterior distribution over the expected arm rewards, and at each round takes a sample from each arm's posterior, and then, plays the arm with the largest sample. Reward observed from the played arm is then used to update its posterior. This sampling strategy allows the learner to frequently select the arms whose probabilities of being optimal are the highest based on their posteriors and to occasionally explore inferior arms to refine their posteriors. Policies in the other end of the spectrum use the principle of optimism under the face of uncertainty. Notable examples include policies based on upper confidence bound (UCB) indices \cite{lai1985asymptotically,agrawal1995sample,auer2002finite}, which are usually composed of sample mean reward of an arm plus an exploration bonus that accounts for the uncertainty in the arm's reward estimates. The strategy is to play the arm with the highest UCB index to tradeoff exploration and exploitation. Unlike Thompson sampling, performance of this type of policies heavily rely on the confidence sets used to compute the exploration bonus \cite{russo2014learning}. This together with the superior performance of Thompson sampling documented in numerous applications \cite{chapelle2011empirical,scott2010modern} motivate us to consider a Thompson sampling based approach for our problem. 

Our main focus in this paper, i.e., combinatorial MAB (CMAB) \cite{gai2012combinatorial,cesa2012combinatorial,kveton2015tight,chen2013combinatorial}, is an extension of MAB where the learner selects a \textit{super arm} at each round, which is defined to be a subset of the \textit{base arms}. Then, the learner observes and collects the reward associated with the selected super arm, and also observes the outcomes of the base arms that are in the selected super arm. This type of feedback is also called \textit{semi-bandit} feedback. For instance, when allocating users to orthogonal channels, each user-channel pair represents a base arm, the super arm is the set of user-channel pairs in the selected allocation, outcomes of base arms are indicators of successful packet transmissions and the reward is the number of packets successfully transmitted, i.e., sum of the indicators. While CMAB is general enough to model the aforementioned resource allocation problem, it does not fully capture the probabilistic structure of influence maximization, item list recommendation and reliable packet routing applications discussed in the preceding paragraphs. Therefore, we consider a generalized version of CMAB, called CMAB with probabilistically triggered arms (CMAB-PTA) \cite{chen2016combinatorial}, where the selected super arm probabilistically triggers a set of base arms, and the expected reward obtained in a round is a function of the set of triggered base arms and their expected outcomes. For instance, in influence maximization, each edge of the graph represents a base arm, the super arm is the selected seed set of nodes, outcomes of base arms are indicators of influence propagation on the corresponding edge (see, e.g., the independent cascade model \cite{kempe2003maximizing}) and the reward is the number of influenced nodes, i.e., the set of nodes reachable from the seed set of nodes after the outcomes of base arms are realized. Triggered base arms in this case correspond to the set of edges that originate from all influenced nodes (including the seed set).

The regret for CMAB-PTA is defined as the difference between the expected cumulative reward of an oracle that always selects the super arm with the highest expected reward and that of the learner given a particular environment. Then, the Bayesian regret is the expected regret over all possible environments. Our goal is to design an algorithm that achieves the smallest rate of growth of the (Bayesian) regret over time, as this will ensure that the average reward of the learner will converge to the highest possible expected reward. To this end, we propose a Bayesian algorithm called combinatorial Thompson sampling (CTS) and analyze its regret assuming that the learner does not know the expected base arm outcomes beforehand but has access to an exact optimization oracle. Essentially, this oracle outputs an estimated optimal super arm given estimates of expected base arm outcomes as inputs. \reva{When the expected reward is Lipschitz continuous in the expected base arm outcomes, we show that CTS achieves $O(\sum_{i =1}^m\log T/(p_i\Delta_i))$ regret and $O(\max\{\Ex[m\sqrt{T\log T/p^*}],\Ex[m^2/p^*]\})$ Bayesian regret, where $m$ denotes the number of base arms, $p_i$ denotes the minimum non-zero triggering probability of base arm $i$, $\Delta_i$ denotes the minimum suboptimality gap of base arm $i$, $T$ denotes the time horizon, and $p^*$ denotes the overall minimum non-zero triggering probability. We also show that when the expected reward satisfies the triggering probability modulated (TPM) Lipschitz continuity in \cite{wang2017improving}, which is a stronger assumption than the regular Lipschitz continuity yet still satisfied by the network optimization problems that we consider, CTS achieves $O(\max\{m\sqrt{T\log T},m^2\})$ Bayesian regret independent of the triggering probabilities.}

In addition to these more general cases, we also prove that when triggering probabilities are non-zero for all base arms, CTS achieves $O(1/p^*\log(1/p^*))$ regret independent of the time horizon. This setting is of particular interest since it can model random behavior of users in a recommender system. For instance, a user may rate an item even when it is not in the list of recommended items as a result of an exogenous event (by rating the item on a partner website or by explicitly navigating to the item to rate it). Moreover, it is also closely linked to related work on online learning with probabilistic graph feedback \cite{liu2018information,li2019stochastic} and MAB with side observations \cite{degenne2018bandits}. Specifically, the models in \cite{li2019stochastic} and \cite{degenne2018bandits} become special cases of our work when the graph is fully-connected for the one-step case and connected for the cascade case in \cite{li2019stochastic} and when the probability of having an observation from any arm is non-zero in \cite{degenne2018bandits}.

We complement our theoretical findings via extensive simulations in the following combinatorial network optimization problems: cascading bandits \cite{kveton2015cascading}, probabilistic maximum coverage bandits \cite{chen2016combinatorial} and influence maximization bandits \cite{chen2016combinatorial}. For cascading bandits, we show that CTS, which uses Beta posterior on base arms significantly outperforms all competitor algorithms that use either UCB indices \cite{kveton2015cascading} or Thompson sampling with Gaussian posterior \cite{cheung2019thompson}. The latter finding emphasizes the importance of working with the correct type of posterior. For probabilistic maximum coverage bandits, we show that CTS achieves an order of magnitude improvement over combinatorial UCB (CUCB) in \cite{chen2016combinatorial} when both algorithms use an exact oracle. For influence maximization bandits, we show a similar result even when both algorithms use an approximation oracle instead of an exact oracle. 

In summary, the main contribution of this paper is to analyze Thompson sampling for a very general combinatorial online learning framework that is comprehensive enough to model many different sequential decision-making applications defined over networks and show its optimality both theoretically and experimentally. The rest of the paper is organized as follows. Related work is given in Section~\ref{sec:related} followed by problem formulation in Section~\ref{sec:problem}. Applications of CMAB-PTA are detailed in Section~\ref{sec:networking}. Description of CTS and regret bounds are given in Section \ref{sec:algorithm}. Proofs of the main results are explained in Sections~\ref{sec:proof-bayesian} and \ref{sec:proof-bayesiantpm} (some proofs are left to the supplemental document). Numerical results are presented in Section~\ref{sec:numerical} and concluding remarks are given in Section~\ref{sec:conclusion}.

\section{Related Work}
\label{sec:related}
CMAB has been studied under various assumptions on the relation between super arms, base arms and rewards \cite{cesa2012combinatorial}. Here, we mainly discuss the related works that assume semi-bandit feedback as we do in our work. A version of CMAB in which the expected reward of a super arm is a linear combination of the expected outcomes of the base arms in that super arm is studied in \cite{gai2012combinatorial}. For this problem, it is shown in \cite{kveton2015tight} that a combinatorial version of UCB1 in \cite{auer2002finite} achieves $O(Km\log T/\Delta)$ gap-dependent and $O(\sqrt{KmT\log T})$ gap-free (worst-case) regrets, where $m$ is the number of base arms, $K$ is the maximum number of base arms in a super arm, and $\Delta$ is the gap between the expected reward of the optimal super arm and the second best super arm.

 Later on, this setting is generalized to allow the expected reward of each super arm to be a more general function of the expected outcomes of the base arms that obeys certain monotonicity and bounded smoothness conditions \cite{chen2013combinatorial}. The main challenge in the general case is that the optimization problem itself is NP-hard, but an approximately optimal solution can usually be computed efficiently for many special cases \cite{nemhauser1978analysis}. Therefore, it is assumed that the learner has access to an approximation oracle, which can output a super arm that has expected reward that is at least $\alpha$ fraction of the optimal reward with probability at least $\beta$ when given the expected outcomes of the base arms. Thus, the regret is measured with respect to the $\alpha \beta$ fraction of the optimal reward, and it is proven that a combinatorial variant of UCB1, called CUCB, achieves $O(\sum_{i=1}^m\log T/\Delta_{i})$ regret when the bounded smoothness function is $f(x)=\gamma x$ for some $\gamma>0$, where $\Delta_{i}$ is the minimum gap between the expected reward of the optimal super arm and the expected reward of any suboptimal super arm that contains base arm $i$.

Recently, it is shown in \cite{wang2018thompson} that Thompson sampling can achieve $O(\sum_{i=1}^m\log T/\Delta_{i})$ regret for the general CMAB under a Lipschitz continuity assumption on the expected reward, given that the learner has access to an exact computation oracle, which outputs an optimal super arm when given the set of expected base arm outcomes. Moreover, it is also shown that in general the learner cannot guarantee sublinear regret when it only has access to an approximation oracle. Since the setting studied in this paper is a special case of ours, for our theoretical analysis we also assume that the learner uses an exact computation oracle. Nevertheless, we show in Section~\ref{sec:numerical} that in practice CTS works well even when used with an approximation oracle. Another related work on CMAB \cite{merlis2019batch} considers a new smoothness condition termed the Gini-weighted smoothness on the expected reward. For some problem types, this leads to regret bounds with better dependency on the sizes of super arms when compared with the common linear dependency of the existing algorithms.

Different from CMAB, papers on CMAB-PTA assume that the expected reward is a function of the expected outcomes of the triggered base arms, which is a random superset of base arms in the selected super arm. For this problem, it is shown in \cite{chen2016combinatorial} that logarithmic regret is achievable when the expected reward function has the $\ell_\infty$ bounded smoothness property. However, this bound depends on $1/p^*$, where $p^*$ is the minimum non-zero triggering probability. Later, it is shown in \cite{wang2017improving} that under a stricter smoothness assumption on the expected reward function, called triggering probability modulated (TPM) bounded smoothness, it is possible to achieve regret that does not depend on $1/p^*$. It is also shown in this work that the dependence on $1/p^*$ is unavoidable for the general case. In another work \cite{saritac2017combinatorial}, CMAB-PTA is considered for the case when the arm triggering probabilities are all positive, and it is shown that both CUCB and CTS achieve bounded regret. However, their $O((1/p^*)^4)$ bound has a much worse dependence on $p^*$ than our $O(1/p^*\log(1/p^*))$ bound.

\begin{table}
    \caption{Summary of the related work in comparison with our work.}
    \label{tbl:related}
    \setlength{\tabcolsep}{3pt}
    \begin{tabularx}{\linewidth}{*4{l}X}
        \toprule
        \textbf{Publ.} & \textbf{Algorithm} & \textbf{Oracle} & \textbf{PTAs} & \textbf{Regret Bound} \\
        \midrule
        \cite{chen2013combinatorial} & CUCB & Approx. & No & $O(\sum_i\log T/\Delta_i)$ \\
        \cite{chen2016combinatorial} & CUCB & Approx. & Yes & $O(\sum_i \log T/(p_i\Delta_i))$ \\
        \cite{wang2017improving} & CUCB & Approx. & Yes & $O(\sum_i\log T/\Delta_i)$$^\dagger$ \\
        \cite{wang2018thompson} & CTS & Exact & No & $O(\sum_i \log T/\Delta_i)$ \\
        \cite{saritac2017combinatorial} & CUCB \& CTS & Approx. & Yes$^*$ & $O((1/p^*)^4)$ \\
        \midrule
        \textbf{Ours} & CTS & Exact & Yes & $O(\sum_i \log T/(p_i\Delta_i))$ \\
        & & & Yes & $O(\max\{\Ex[m\sqrt{T\log T/p^*}]$ \\
        & & & & \multicolumn{1}{r}{$,\Ex[m^2/p^*]\})$$^\ddagger$} \\
        & & & Yes & $O(\max\{m\sqrt{T\log T},m^2\})$$^{\dagger\ddagger}$ \\
        & & & Yes$^*$ & $O(1/p^*\log (1/p^*))$ \\
        \bottomrule
    \end{tabularx}
    \smallskip \\
    $^*$The case when the arm triggering probabilities are all positive. \\
    $^\dagger$Under the TPM bounded smoothness assumption. \\
    $^\ddagger$Bound for the Bayesian regret.
\end{table}

Apart from the works mentioned above, numerous other works also tackle related online learning problems. For instance, \cite{kveton2014matroid} considers matroid bandits, which is a special case of CMAB where the super arms are given as independent sets of a matroid with base arms being the elements of the ground set, and the expected reward of a super arm is the sum of the expected outcomes of the base arms in the super arm. Another example is cascading bandits \cite{kveton2015cascading}, which is a special case of CMAB-PTA, where each super arm corresponds to a ranked list of items and base arms are triggered according to a user click model. A plethora of papers exist on UCB based policies for variants of these two models (see e.g., \cite{talebi2016optimal} for a variant of matroid bandits and \cite{kveton2015combinatorial} and \cite{li2016contextual} for variants of cascading bandits.) Apart from these, \cite{cheung2019thompson} considers Thompson sampling with Gaussian posterior for cascading bandits and proves that the worst-case regret is $\tilde{O}(\sqrt{KmT})$. We show in Section~\ref{sec:numerical} that CTS significantly outperforms their algorithm for cascading bandits. We think that this is the case in practice because Beta posterior is more suitable in modeling click probabilities compared to Gaussian posterior.

Several other works focus on contextual CMAB \cite{qin2014contextual,li2016contextual,saritac2018online}, CMAB with adversarial rewards \cite{audibert2013regret,combes2015combinatorial} and CMAB with knapsacks \cite{sankararaman2018combinatorial}. Most recently there has been a surge of interest in analyzing CMAB under the full-bandit feedback setting, where the learner only observes the reward of the selected super arm but not the outcomes of the base arms \cite{agarwal2018regret,rejwan2019combinatorial}. For instance, \cite{rejwan2019combinatorial} uses a sampling method based on Hadamard matrices to estimate base arm rewards from full-bandit feedback. On the other hand, \cite{lin2014combinatorial} considers a more general feedback model where the learner observes a linear combination of base arm's rewards. Table~\ref{tbl:related} compares our work with the most closely related publications in terms of their assumptions and the regret bounds they show.

\section{Problem Formulation}
\label{sec:problem}
CMAB-PTA is a decision-making problem where the learner interacts with its environment through $m$ base arms, indexed by the set $[m]\coloneqq\{1,2,...,m\}$ sequentially over rounds indexed by $t\in[T]$. In this paper, we consider the model introduced in \cite{chen2016combinatorial} and borrow the notation from \cite{wang2018thompson}. In this model, the following events take place in order in each round~$t$:
\begin{itemize}
    \item The learner selects a subset of base arms, denoted by $S(t)$, which is called a super arm.
    \item $S(t)$ causes some other base arms to probabilistically trigger based on a stochastic triggering process, which results in a set of triggered base arms $S'(t)$ that contains $S(t)$.
    \item The learner obtains a reward that depends on $S'(t)$ and observes the outcomes of the base arms in $S'(t)$.
\end{itemize}

Next, we describe in detail the base arm outcomes, the super arms, the triggering process, the reward, the observation (feedback) model and the regret. 

\subsection{Base Arm Outcomes}

In each round $t$, the environment draws a random outcome vector $\bm{X}(t)\coloneqq(X_1(t),X_2(t),\ldots,X_m(t))$ from a probability distribution $D$ on $[0,1]^m$ independent of the previous rounds, where $X_i(t)$ represents the outcome of base arm $i$. $D$ is unknown by the learner, but it belongs to a class of distributions ${\cal D}$ which is known by the learner. We define the mean outcome (parameter) vector as $\bm{\mu}\coloneqq(\mu_1,\mu_2,\ldots,\mu_m)$, where $\mu_i\coloneqq\Ex_{\bm{X}\sim D}[X_i(t)]$, and use $\bm{\mu}_S$ to denote the projection of $\bm{\mu}$ on $S$ for $S\subseteq[m]$.

Since CTS computes a posterior over $\bm{\mu}$, the following assumption is made to have an efficient and simple update of the posterior distribution. 

\begin{assumption} \label{asm:independence}
	The outcomes of all base arms are mutually independent, i.e., $D=D_1\times D_2\times\cdots\times D_m$.
\end{assumption}

Note that this independence assumption holds in many applications, including the influence maximization problem with independent cascade influence propagation model \cite{kempe2003maximizing}.

\subsection{Super Arms and the Triggering Process}

The learner is allowed to select $S(t)$ from a subset of $2^{[m]}$ denoted by $\mathcal{I}$, which corresponds to the set of feasible super arms. Once $S(t)$ is selected, all base arms $i\in S(t)$ are immediately triggered. These arms can trigger other base arms that are not in $S(t)$, and those arms can further trigger other base arms, and so on. At the end, a random superset $S'(t)$ of $S(t)$ is formed that consists of all triggered base arms as a result of selecting $S(t)$. We have $S'(t) \sim D^{\mathrm{trig}}(S(t), \bm{X}(t))$, where $D^{\mathrm{trig}}$ is the probabilistic triggering function that describes the triggering process. For instance, in the influence maximization problem, $D^{\mathrm{trig}}$ may correspond to the independent cascade influence propagation model defined over a given influence graph \cite{kempe2003maximizing}. The triggering process can also be described by a set of triggering probabilities. For each $i \in [m]$ and $S \in \mathcal{I}$, $p_i^{D',S}$ denotes the probability that base arm $i$ is triggered when super arm $S$ is selected given that the arm outcome distribution is $D'\in\mathcal{D}$. For simplicity, we let $p_i^S=p_i^{D,S}$, where $D$ is the true arm outcome distribution. Let $\tilde{S}\coloneqq\{i\in[m]:p_i^S>0\}$ be the set of all base arms that could potentially be triggered by super arm $S$, which is called the \textit{triggering set} of $S$. We have that $S(t)\subseteq S'(t)\subseteq \tilde{S}(t)\subseteq[m]$. We define $p_i\coloneqq\min_{S\in\mathcal{I}:i\in\tilde{S}}p_i^S$ as the minimum nonzero triggering probability of base arm $i$, and $p^*\coloneqq\min_{i\in[m]}p_i$ as the minimum nonzero triggering probability.

\reva{Before moving on, we would like to point out that the entire triggering process could have been represented by writing $S'(t)\sim \bar{D}^{\mathrm{trig}}(S(t))$, where any possible dependence of the process on the outcome distribution $D$ would have been hidden inside $\bar{D}^{\mathrm{trig}}$. Instead, we chose to break down the triggering process into two stages: $\bm{X}(t)\sim D$ and $S'(t)\sim D^{\mathrm{trig}}(S(t),\bm{X}(t))$, where $D$ and $D^{\mathrm{trig}}$ together are equivalent to $\bar{D}^{\mathrm{trig}}$. This is motivated by the prior knowledge of the learner. Note that, while the learner fully knows $D^{\mathrm{trig}}$, it does not know anything about $D$ except the class of distributions $\mathcal{D}$ that it belongs to, resulting in only a partial knowledge about $\bar{D}^{\mathrm{trig}}$.}

\subsection{Reward}

At the end of round $t$, the learner receives a reward that depends on the set of triggered arms $S'(t)$ and the outcome vector $\bm{X}(t)$, which is denoted by $R(S'(t),\bm{X}(t))$. For simplicity of notation, we also use $R(t)=R(S'(t),\bm{X}(t))$ to denote the reward in round $t$. Note that whether a base arm is in the selected super arm or is triggered afterwards is not relevant in terms of the reward. We assume that the expected reward depends on the mean outcome vector in a specific way by making the following mild assumptions about the expected reward function. We note that these assumptions are standard in the CMAB literature \cite{chen2016combinatorial,wang2018thompson} and hold for the networking applications given in Section~\ref{sec:networking}. The first assumption states that the expected reward is only a function of $S(t)$ and $\bm{\mu}$.

\begin{assumption} \label{asm:rdependence}
    The expected reward of super arm $S\in\mathcal{I}$ only depends on $S$ and the mean outcome vector $\bm{\mu}$, i.e., there exists a function $r$ such that 
    \begin{align*}
        \Ex[R(t)] &= \Ex_{S'(t)\sim D^{\mathrm{trig}}(S(t),\bm{X}(t)),\bm{X}(t)\sim D}[R(S'(t),\bm{X}(t))] \\
        &= r(S(t),\bm{\mu}) ~.
    \end{align*}
\end{assumption}

In order to learn the best action, we require the estimate of the expected reward vector to converge to the true expected reward vector as the number of observations increases. This can be done when the expected reward varies smoothly with the mean outcome vector. Below, we state a form of continuity for the expected reward.

\begin{assumption}{(Lipschitz continuity)} \label{asm:rsmoothness}
    There exists a constant $B > 0$, such that for every super arm $S$ and every pair of mean outcome vectors $\bm{\mu}$ and $\bm{\mu'}$, we have 
    \begin{align*}
        |r(S,\bm{\mu})-r(S,\bm{\mu'})| \leq B\|\bm{\mu}_{\tilde{S}}-\bm{\mu'}_{\tilde{S}}\|_1
    \end{align*}
    where $\|\cdot\|_1$ denotes the $l_1$ norm.
\end{assumption}

\reva{In addition to Lipschitz continuity, we also consider the \textit{triggering probability modulated} (TPM) Lipschitz continuity introduced in \cite{wang2017improving}. This is a stricter assumption than the regular Lipschitz continuity (one implies the other) but leads to tighter regret bounds in terms of the triggering probabilities. All of the networking applications considered in Section~\ref{sec:networking} still satisfy the TPM Lipschitz continuity.}

\reva{\begin{assumption}{(Triggering probability modulated Lipschitz continuity)} \label{asm:rsmoothnesstpm}
There exists a constant $B'>0$, such that for every super arm $S$ and every pair of outcome distributions $D$ and $D'$ with mean outcome vectors $\bm{\mu}$ and $\bm{\mu'}$ respectively, we have
    \begin{align*}
        |r(S,\bm{\mu})-r(S,\bm{\mu'})| \leq B'\sum_{i\in\tilde{S}}p_i^{D,S}|\mu_i-\mu'_i| ~.
    \end{align*}
\end{assumption}}

\reva{Finally, we require a monotonicity assumption in order to facilitate the UCB-based analysis that some of our results rely on, namely Theorems~\ref{thm:bayesian} and \ref{thm:bayesiantpm}. Again, all of the networking applications considered in Section~\ref{sec:networking} satisfy the following monotonicity assumption.}

\reva{\begin{assumption} \label{asm:rmonotonicity}
    For every super arm $S$ and every pair of mean outcome vectors $\bm{\mu}$ and $\bm{\mu'}$, we have $r(S,\bm{\mu})\leq r(S,\bm{\mu'})$ if $\mu_i\leq\mu'_i$ for all $i\in[m]$.
\end{assumption}}

\subsection{Observation Model}

We consider the semi-bandit feedback model, where at the end of round $t$, the learner observes the individual outcomes of the triggered arms, denoted by $Q(S'(t),\bm{X}(t))\coloneqq\{(i,X_i(t)):i\in S'(t)\}$. Again, for simplicity of notation, we also use $Q(t)=Q(S'(t),\bm{X}(t))$ to denote the observation at the end of round $t$. Based on this, the only information available to the learner when choosing the super arm to select in round $t+1$ is its observation history, given as $\mathcal{F}_t\coloneqq\{(S(\tau),Q(\tau)):\tau\in[t]\}$. 

In short, the tuple $([m],\mathcal{I},D,D^{\mathrm{trig}},R)$ constitutes a CMAB-PTA problem instance. Among the elements of this tuple only $D$ is unknown to the learner.

\subsection{Regret} \label{sec:problem-regret}

In order to evaluate the performance of the learner, we define the set of optimal super arms given an $m$-dimensional parameter vector $\bm{\theta}$ as $\mathrm{OPT}(\bm{\theta})\coloneqq\argmax_{S\in\mathcal{I}}r(S,\bm{\theta})$. We use $\mathrm{OPT}\coloneqq\mathrm{OPT}(\bm{\mu})$ to denote the set of optimal super arms given the true mean outcome vector $\bm{\mu}$. Based on this, we let $S^*$ to represent a specific super arm in $\argmin_{S\in\mathrm{OPT}}|\tilde{S}|$, which is the set of super arms that have triggering sets with minimum cardinality among all optimal super arms. We also let $k^*\coloneqq|S^*|$ and $\tilde{k}^*\coloneqq|\tilde{S}^*|$.

Next, we define the suboptimality gap due to selecting super arm $S\in\mathcal{I}$ as $\Delta_S\coloneqq r(S^*,\bm{\mu})-r(S,\bm{\mu})$, the maximum suboptimality gap as $\Delta_{\max}\coloneqq\max_{S\in\mathcal{I}}\Delta_S$, and the minimum suboptimality gap of base arm $i$ as $\Delta_i\coloneqq\min_{S\in\mathcal{I}-\mathrm{OPT}:i\in\tilde{S}}\Delta_S$.\footnote{If there is no such super arm $S$, let $\Delta_i=\infty$.} The goal of the learner is to minimize the (expected) regret over the time horizon $T$, given by
\begin{align}
    \mathrm{Reg}(T) &\coloneqq \Ex\left[ \sum_{t=1}^T(r(S^*,\bm{\mu})-r(S(t),\bm{\mu})) \middle| \bm{\mu} \right] \nonumber \\
    &= \Ex\left[ \sum_{t=1}^T\Delta_{S(t)} \middle| \bm{\mu} \right] ~. \label{eqn:regret}
\end{align}

\reva{In addition to the expected regret, we also consider the \textit{Bayesian regret}, given by
\begin{align*}
    \mathrm{BayReg}(T) &\coloneqq \Ex\left[ \sum_{t=1}^T(r(S^*,\bm{\mu})-r(S(t),\bm{\mu})) \right] \\
    &= \Ex_{\bm{\mu}}[\mathrm{Reg}(T)]
\end{align*}
where the true mean outcome vector $\bm{\mu}$ is viewed as a random variable. For simplicity, we will assume that $\bm{\mu}$ has a uniform prior. However, this can easily be extended to any other Dirichlet prior simply by modifying the initial values of $a_i$'s and $b_i$'s in Algorithm~\ref{alg:CTS}, which determine the initial prior over the base arm outcomes. It is important to note here that asymptotic bounds on the Bayesian regret are essentially asymptotic (gap-free) bounds on the regret \cite{russo2014learning}. Formally, if $\mathrm{BayReg}(T)\in O(f(T))$ for some non-negative function $f(T)$, then $\mathrm{Reg}(T)\in O_P(f(T))$, that is there exists $T_0>0$ such that for all $\epsilon>0$ there exists $M>0$ such that $\Pr(\mathrm{Reg}(T)/f(T)\geq M)\leq\varepsilon$ for all $T>T_0$.}

\section{Networking Applications}
\label{sec:networking}
Here, we introduce three networking applications of CMAB-PTA: cascading bandits, probabilistic maximum coverage bandits, and influence maximization bandits. Numerical experiments given in Section~\ref{sec:numerical} explore specific cases of all these problems that are generated either synthetically or from real-world data.

\subsection{Cascading Bandits} \label{sec:networking-cascading}

\subsubsection{Disjunctive Form for Search Engine Optimization}
In the disjunctive form of the cascading bandit problem \cite{kveton2015cascading}, a search engine outputs a list of $K$ web pages for each of its \reva{$W$ users} among a set of $V$ web pages. Then, the users examine their respective lists, and click on the first page that they find attractive. If all pages fail to attract them, they do not click on any page. The goal of the search engine is to maximize the number of clicks.

This problem can be modeled as an instance of CMAB-PTA as follows. The base arms are page-user pairs $(i,j)$, where $i\in[V]$ and $j\in[W]$. User $j$ finds page $i$ attractive independent of other users and other pages with probability $p_{i,j}$. The super arms are $W$-many lists of $K$-tuples, where each $K$-tuple represents the list of pages shown to a user. Given a super arm $S$, let $S(k,j)$ denote the $k$th page that is selected for user $j$. Then, the triggering probabilities can be written as
\begin{align*}
    p_{(i,j)}^S &= \begin{cases}
        1 & \text{if } i=S(1,j) \\
        \prod_{k'=1}^{k-1}(1-p_{S(k',j),j}) & \text{if } \exists k\neq 1:i=S(k,j) \\
        0 & \text{otherwise} ~,
    \end{cases}
\end{align*}
that is we observe feedback for a top selection immediately, and observe feedback for the other selections only if all previous selections fail to attract the user. The expected reward of playing super arm $S$ can be written as
\begin{align*}
    r(S,\bm{p}) = \sum_{j=1}^W\left( 1-\prod_{k=1}^K(1-p_{S(k,j),j}) \right)
\end{align*}
for which Assumptions~\ref{asm:rsmoothness} and \ref{asm:rsmoothnesstpm} hold when $B=1$ and $B'=1$ respectively.

\subsubsection{Conjunctive Form for Network Routing Reliability}
One can also consider the conjunctive analogue of the problem, where the goal of the search engine is to---somewhat peculiarly---maximize the number of users with lists that do not contain any unattractive page, and when examining their lists, users provide feedback by reporting the first unattractive page. Formally,
\begin{align*}
	p_{(i,j)}^S &= \begin{cases}
		1 & \text{if } i=S(1,j) \\
		\prod_{k'=1}^{k-1}p_{S(k',j),j} & \text{if } \exists k\neq 1:i=S(k,j)  \\
		0 & \text{otherwise}
	\end{cases}
\end{align*}
and
\begin{align*}
	r(S,\bm{p}) = \sum_{j=1}^W\prod_{k=1}^K p_{S(k,j),j} ~. 
\end{align*}

This conjunctive form fits particularly well to the network reliability problem \cite{lee2010diverse}, where we are interested in finding the most reliable routing path in a communication network. We consider routing paths as super arms, $\mathcal{I}$ being the set of all possible routing paths. Each routing path $S\in\mathcal{I}$ consists of a variable number of ordered links that correspond to the base arms. We denote the index of $k$th link in routing path $S$ as $S(k)$ and the length of the path as $|S|$. Each link $i\in[m]$ in a routing path can fail independently from all other links with probability $1-p_i$. Then, the probabilistic reliability of a routing path is defined as the probability of successful operation with no link in the path failing.

Since we can only observe whether a link has failed or not up to the first link that has failed, the triggering probability of link $i$ when routing path $S$ is selected can be written as
\begin{align*}
    p_i^S &= \begin{cases}
        1 & \text{if } i=S(1) \\
        \prod_{k'=1}^{k-1}p_{S(k')} & \text{if } \exists k\neq 1:i=S(k) \\
        0 & \text{otherwise}
    \end{cases}
\end{align*}
and the probabilistic reliability of routing path $S$---in other words, the expected reward---becomes
\begin{align*}
    r(S,\bm{p}) = \prod_{k=1}^{|S|}p_{S(k)} ~.
\end{align*}

\subsection{Probabilistic Maximum Coverage Bandits}

In the probabilistic maximum coverage problem, an online shopping site advertises $K$ items that are selected from a catalog of $V$ items to its \reva{$W$ users}. Each user inspects all of the items that are advertised and likes one of the attractive items. The users do not like any item if none of the items attract them. The goal of the shopping site is to maximize the number of likes. Analogous to cascading bandits, in this problem, base arms are item-user pairs $(i,j)$, where $i\in[V]$ and $j\in[W]$. User $j$ finds item $i$ attractive independent of other users and other items with probability $p_{i,j}$. The super arms are the set of all pairs $(i,j)$ such that item $i$ is the element of a size-$K$ subset of $[V]$.

This can also model the problem of allocating orthogonal channels to secondary users in a cognitive radio network \cite{gai2012combinatorial}. Consider $V$ as the number of orthogonal channels, $W$ as the number of secondary users ($V>W$), and $p_{i,j}$ as the expected throughput that user $j$ can obtain using channel $i$. We would like to maximize the expected sum throughput by allocating each user $j$ a unique channel $c_j\in[V]$ so that $c_j=c_{j'}$ if and only if $j=j'$ for all $j,j'\in[W]$. Given one such allocation, the corresponding super arm would be the set $S=\{(c_j,j)\}_{j=1}^W$ and the expected reward of it can be written as $r(S,\bm{p})=\sum_{(i,j)\in S} p_{i,j}$. \reva{Allocating orthogonal channels to secondary users can also be conceptualized as allocating tasks to workers in a mobile crowdsourcing platform \cite{nee2018context,nika2020contextual}. Then, $p_{i,j}$ would be the probability of worker $j$ completing task $i$ successfully and $r(S,\bm{p})$ would be the expected number of completed tasks.}

In its classical form, this problem does not have any PTAs. In order to provide an example case with strictly positive triggering probabilities, we introduce the \textit{word-of-mouth effect} as follows. Regardless of the shopping site's decisions, we assume that users inspect, i.e., they explicitly search or navigate to, unadvertised items independently with probability $p^*$.\footnote{For simplicity we assume that $p^*$ is the same for all items while it can be different in practice.} This can happen if users hear about the items outside of the shopping site (e.g., from their friends or from another venue). Then, the triggering probabilities can be written as
\begin{align*}
    p_{(i,j)}^S &=  \begin{cases}
        1 & \text{if } (i,j)\in S \\
        p^* & \text{otherwise}
    \end{cases}
\end{align*}
and the expected reward of super arm $S$ can be written as
\begin{align*}
    r(S,\bm{p}) = \sum_{j=1}^W\left( 1-\prod_{i=1}^V(1-p_{(i,j)}^Sp_{i,j}) \right)
\end{align*}
for which Assumption~\ref{asm:rsmoothness} and \ref{asm:rsmoothnesstpm} hold when $B=1$ and $B'=1$ respectively.

\subsection{Influence Maximization Bandits}

In the influence maximization problem with the independent cascade model \cite{kempe2003maximizing}, the learner is given a directed graph denoted by $G=(V,E)$, where $V$ is the set of nodes and $E$ is the set of edges. The learner selects and triggers a set of nodes $S\subseteq V$ such that $|S|=K$, where $K$ is one of the problem parameters. This is the first iteration of a diffusion process. In each subsequent iteration, a node $i$ that was triggered in the previous iteration might trigger another node $j$ that is not triggered yet if $j$ is adjacent to one of its outgoing edges. This happens with probability $p_{i,j}$ independently from the states of all other nodes. The diffusion process ends when no new node triggers in an iteration. The goal of the learner is to maximize---through the initial decision of nodes---the number of triggered nodes at the end of the diffusion process.

The problem can be modeled as a CMAB problem with PTAs, where base arms are edges $(i,j)\in E$ and super arms are the set of all edges $(i,j)$ such that $i\in S$.\footnote{This is equivalent to defining the super arm as $S$ itself.} Assumption~\ref{asm:rsmoothness} holds as proven in Lemma~6 in \cite{chen2016combinatorial} and Assumption~\ref{asm:rsmoothnesstpm} holds as proven in Lemma~2 in \cite{wang2017improving}.

\section{Combinatorial Thompson Sampling}
\label{sec:algorithm}
CTS is a Bayesian algorithm that selects super arms by sampling from posterior distributions of base arms. Its pseudocode is given in Algorithm~\ref{alg:CTS}. We assume that the learner has access to an exact computation oracle, which takes as input an $m$-dimensional parameter vector $\bm{\theta}$ and the problem structure $([m],\mathcal{I},D^{\text{trig}},R)$, and outputs a super arm, denoted by $\mathrm{Oracle}(\bm{\theta})$ such that $\mathrm{Oracle}(\bm{\theta})\in\mathrm{OPT}(\bm{\theta})$. CTS keeps a Beta posterior over the mean outcome of each base arm. At the beginning of round $t$, for each base arm $i$ it draws a sample $\theta_i(t)$ from its posterior distribution. Then, it forms the parameter vector in round $t$ as $\bm{\theta}(t)\coloneqq(\theta_1(t),\ldots,\theta_m(t))$, gives it to the exact computational oracle, and selects the super arm $S(t)=\mathrm{Oracle}(\bm{\theta}(t))$. At the end of the round, CTS updates the posterior distributions of the triggered base arms using the observation $Q(t)$. 

\begin{algorithm}
	\caption{Combinatorial Thompson Sampling (CTS)}
	\label{alg:CTS}
	\begin{algorithmic}[1]
		\STATE For each base arm $i$, let $a_i=1$, $b_i=1$
		\FOR{$t=1,2,\ldots$}
		\STATE {For each base arm $i$, draw a sample $\theta_i(t)$ from Beta distribution $\beta(a_i,b_i)$; let $\bm{\theta}(t)\coloneqq(\theta_1(t),\ldots,\theta_m(t))$}
		\STATE {Select super arm $S(t)=\mathrm{Oracle}(\bm{\theta}(t))$, get the observation $Q(t)$}
		\FORALL{$(i,X_i)\in Q(t)$}
		\STATE $Y_i\gets 1$ with probability $X_i$, $0$ with probability $1-X_i$
		\STATE $a_i\gets a_i+Y_i$
		\STATE $b_i\gets b_i+(1-Y_i)$
		\ENDFOR
		\ENDFOR
	\end{algorithmic}
\end{algorithm}

    \subsection{Regret of CTS under Lipchitz Continuity}
    \label{sec:theorem}
    \begin{theorem} \label{thm:main}
    Under Assumptions~\ref{asm:independence}, \ref{asm:rdependence}, and \ref{asm:rsmoothness}, for all $D$, the regret of CTS by round $T$ is bounded as
    \begin{align*}
        \mathrm{Reg}(T) &\leq \sum_{i=1}^m \max_{S\in\mathcal{I}-\mathrm{OPT}:i\in\tilde{S}} \frac{16B^2|\tilde{S}|\log T}{(1-\rho)p_i(\Delta_S-2B(\tilde{k}^{*2}+2)\varepsilon)} \\
        &\hspace{12pt} + \left( 3+\frac{\tilde{K}^2}{(1-\rho)p^*\varepsilon^2}+\frac{2\mathbb{I}\{p^*<1\}}{\rho^2p^*} \right) m\Delta_{\max} \\
        &\hspace{12pt} + \alpha\frac{8\tilde{k}^*}{p^*\varepsilon^2} \left(\frac{4}{\varepsilon^2}+1\right)^{\tilde{k}^*} \log\frac{\tilde{k}^*}{\varepsilon^2} \Delta_{\max}
    \end{align*}
    for all $\rho\in(0,1)$, and for all $\varepsilon\in(0,1/\sqrt{e}]$ such that $\forall S\in\mathcal{I}-\text{OPT}, \Delta_S>2B(\tilde{k}^{*2}+2)\varepsilon$, where $B$ is the Lipschitz constant in Assumption~\ref{asm:rsmoothness}, $\alpha >0$ is a problem independent constant that is also independent of $T$, and $\tilde{K}\coloneqq\max_{S\in\mathcal{I}}|\tilde{S}|$ is the maximum triggering set size among all super arms.
\end{theorem}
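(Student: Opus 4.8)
The plan is to bound $\mathrm{Reg}(T)=\Ex[\sum_{t=1}^T\Delta_{S(t)}]$ by counting the rounds in which a suboptimal super arm is selected and charging each such round to a base arm in its triggering set. The starting point is a structural consequence of the oracle's optimality together with Lipschitz continuity (Assumption~\ref{asm:rsmoothness}): since $S(t)=\mathrm{Oracle}(\bm\theta(t))$ gives $r(S(t),\bm\theta(t))\ge r(S^*,\bm\theta(t))$, whenever $\Delta_{S(t)}>0$ we must have $\|\bm\theta(t)_{\tilde S(t)}-\bm\mu_{\tilde S(t)}\|_1+\|\bm\theta(t)_{\tilde S^*}-\bm\mu_{\tilde S^*}\|_1$ of order $\Delta_{S(t)}/B$. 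Thus every suboptimal round either has a base arm of the played triggering set whose sample is far from its mean, or a base arm of $\tilde S^*$ whose sample is far from its mean; splitting this budget across the $\tilde k^*$ coordinates of $\tilde S^*$ is what produces the $2B(\tilde k^{*2}+2)\varepsilon$ slack appearing in the denominators.

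I would then layer two concentration arguments. First, empirical concentration: after base arm $i$ has been triggered (hence observed, with the Bernoulli recoding of lines~6--8 of Algorithm~\ref{alg:CTS}) $N_i$ times, its posterior mean lies within $\varepsilon$ of $\mu_i$ except with probability decaying in $N_i$ by a Chernoff bound, using Assumption~\ref{asm:independence} so that the posteriors factor across arms. Second, posterior concentration: conditioned on a good empirical mean, the Beta sample $\theta_i(t)$ lands within $\varepsilon$ of it except with small probability, via the standard Beta tail estimates. Rounds in which the culprit base arm is \emph{under-observed}---its observation count below a threshold of order $B^2|\tilde S|\log T/(\Delta_S-2B(\tilde k^{*2}+2)\varepsilon)^2$---account for the leading term: each such round costs a gap of $\Theta(\Delta_S)$, so multiplying the observation threshold by $\Delta_S$ collapses one power of the effective gap and leaves the stated first-power denominator. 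To convert an observation-count threshold into a round count I would use that every play of a super arm triggering $i$ triggers it with probability at least $p_i$; a Chernoff bound with slack parameter $\rho$ shows that once the number of triggering opportunities exceeds the required observation count by the factor $1/((1-\rho)p_i)$, the arm is observed at least that many times except with probability $\exp(-\Theta(\rho^2 p_i\cdot\text{rounds}))$. This is precisely the origin of the $1/((1-\rho)p_i)$ factor in the leading term and of the $\tilde K^2/((1-\rho)p^*\varepsilon^2)$ and $2\mathbb{I}\{p^*<1\}/(\rho^2 p^*)$ contributions to the $T$-independent constant, the indicator dropping the latter when every triggered arm is triggered with certainty.

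The main obstacle is the final term, which controls the rounds where $S^*$ is \emph{not} selected because its own samples $\bm\theta(t)_{\tilde S^*}$ are simultaneously pessimistic even though every arm of $\tilde S^*$ is well observed. Thompson sampling is not optimistic, so unlike a UCB analysis these rounds cannot be dismissed and must instead be bounded by an anti-concentration/regeneration argument in the spirit of Agrawal--Goyal: between consecutive plays of $S^*$ the posteriors of its triggering-set arms sharpen, and one bounds the expected number of rounds until all $\tilde k^*$ of them simultaneously draw samples within $\varepsilon$ of their means. Because this demands a joint good draw over $\tilde k^*$ independent Beta posteriors, the per-round success probability scales like a constant to the power $\tilde k^*$, and summing the resulting geometric waiting times---further inflated by $1/p^*$ to account for the chance that a play of $S^*$ fails to even trigger all of $\tilde S^*$---yields exactly the $\frac{8\tilde k^*}{p^*\varepsilon^2}(4/\varepsilon^2+1)^{\tilde k^*}\log(\tilde k^*/\varepsilon^2)$ factor, with the problem-independent $\alpha$ absorbing the numerical constants from the Beta estimates. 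Assembling the three contributions and choosing $\varepsilon$ to meet the constraint $\Delta_S>2B(\tilde k^{*2}+2)\varepsilon$ then gives the claimed bound; carefully coupling the combinatorial triggering with the per-arm Beta regeneration is where the genuine difficulty lies.
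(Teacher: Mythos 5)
Your plan coincides with the paper's actual proof (given in the supplemental document and the AISTATS 2019 version, which extends Wang and Chen's CTS analysis for CMAB to probabilistically triggered arms): the same oracle-plus-Lipschitz structural inequality forcing $\|\bm{\theta}(t)_{\tilde{S}(t)}-\bm{\mu}_{\tilde{S}(t)}\|_1+\|\bm{\theta}(t)_{\tilde{S}^*}-\bm{\mu}_{\tilde{S}^*}\|_1$ to be of order $\Delta_{S(t)}/B$, the same two-layer empirical/posterior concentration under Assumption~\ref{asm:independence}, the same multiplicative-Chernoff coupling of observation counts to triggering-set counts via $N_i(t)\geq(1-\rho)p_iM_i(t)$ yielding the $1/((1-\rho)p_i)$ factor and the $2\mathbb{I}\{p^*<1\}/(\rho^2p^*)$ constant, and the same Agrawal--Goyal-style regeneration argument, inflated by $1/p^*$ to account for failed triggering of $\tilde{S}^*$, producing the $\alpha\frac{8\tilde{k}^*}{p^*\varepsilon^2}(4/\varepsilon^2+1)^{\tilde{k}^*}\log(\tilde{k}^*/\varepsilon^2)$ term. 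The only detail you leave implicit is the nonuniform (peeling-type) allocation of the per-arm deviation budget that gets $|\tilde{S}|$ to the first power in the leading term, which is exactly what the paper's corresponding lemmas supply.
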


We compare the result in Theorem~\ref{thm:main} with \cite{chen2016combinatorial}, which shows that the regret of CUCB is $O(\sum_{i\in[m]}\log T/(p_i\Delta_i))$ given an $\ell_{\infty}$ bounded smoothness condition on the expected reward function and a bounded smoothness function of $f(x)=\gamma x$. When $\varepsilon$ is sufficiently small, the regret bound in Theorem~\ref{thm:main} is asymptotically equivalent to the regret bound for CUCB (in terms of the dependence on $T$, $p_i$, and $\Delta_i$ for $i \in [m]$). For the case with $p^*=1$ (no probabilistic triggering), the regret bound in Theorem~\ref{thm:main} matches with the regret bound in Theorem~1 in \cite{wang2018thompson} (in terms of the dependence on $T$ and $\Delta_i$ for $i \in [m]$).

As final remarks, it is shown in Theorem~3 in \cite{wang2017improving} that the $1/p_i$ factor that multiplies the $\log T$ term is unavoidable in general. \reva{Moreover, regarding the exponential term $(4/\varepsilon^2+1)^{\tilde{k}^*}$, it is shown in Theorem~3 in \cite{wang2018thompson} that there is at least one instance of CMAB (hence, also an instance of CMAB-PTA) where the regret of CTS is $\Omega(2^{k^*})$. Intuitively, such an exponential term is unavoidable since for CTS to select an optimal super arm that can trigger $\tilde{k}^*$ base arms, all of the samples from those $\tilde{k}^*$ base arms should independently be close to their true means.} The proof of Theorem~\ref{thm:main} is given in the supplemental document. It can also be found in the conference version of the paper \cite{huyuk2019analysis}.

    \subsection{Bayesian Regret of CTS under Lipchitz Continuity}
    \label{sec:theorem-bayesian}
    \reva{\begin{theorem} \label{thm:bayesian}
    Under Assumptions~\ref{asm:independence}, \ref{asm:rdependence}, \ref{asm:rsmoothness}, and \ref{asm:rmonotonicity}, when averaged over $D$, the Bayesian regret of CTS by round $T$ is bounded as
    \begin{align*}
        \mathrm{BayReg}(T) &\leq 4mB\sqrt{\frac{T(2+6\log T)}{(1-\rho)}} \Ex_{\bm{\mu}}\left[\sqrt{\frac{1}{p^*}}\right]  \\
        &\hspace{12pt} + 8m^2B\left(1+\frac{1}{\rho^2}\Ex_{\bm{\mu}}\left[\frac{1}{p^*}\right]\right) 
    \end{align*}
    for all $\rho\in(0,1)$, where $B$ is the Lipschitz constant in Assumption~\ref{asm:rsmoothness}.
\end{theorem}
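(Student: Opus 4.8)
The plan is to bound the Bayesian regret through the posterior-sampling decomposition of Russo and Van Roy, reducing it to the analysis of a UCB-type index, and then to control that index using the concentration of the Beta posterior together with the concentration of the probabilistic triggering process. The key structural fact is that under CTS, conditioned on the observation history $\mathcal{F}_{t-1}$, the sampled vector $\bm{\theta}(t)$ and the true mean vector $\bm{\mu}$ are identically distributed (both follow the posterior), so the played super arm $S(t)=\mathrm{Oracle}(\bm{\theta}(t))$ and the optimal super arm $S^*=\mathrm{Oracle}(\bm{\mu})$ share the same conditional law. Hence for any $\mathcal{F}_{t-1}$-measurable index $U_t(\cdot)$ on super arms, $\Ex[U_t(S^*)\mid\mathcal{F}_{t-1}]=\Ex[U_t(S(t))\mid\mathcal{F}_{t-1}]$, which gives the decomposition
\begin{align*}
    \mathrm{BayReg}(T) &= \sum_{t=1}^T\Ex\big[r(S^*,\bm{\mu})-U_t(S^*)\big] \\
    &\hspace{12pt} + \sum_{t=1}^T\Ex\big[U_t(S(t))-r(S(t),\bm{\mu})\big] ~.
\end{align*}

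First I would fix the index. Let $N_i(t)$ be the number of rounds before $t$ in which base arm $i$ was triggered (observed), let $\hat{\mu}_i(t)$ be its empirical mean, and set $U_t(S)=r(S,\bm{\mu}^+(t))$ with $\mu_i^+(t)\coloneqq\min\{1,\hat{\mu}_i(t)+c_i(t)\}$, confidence radius $c_i(t)=\Theta(\sqrt{\log T/N_i(t)})$, and the convention $\mu_i^+(t)=1$ when $N_i(t)=0$. On the clean event $\mathcal{E}_t\coloneqq\{\,|\hat{\mu}_i(t)-\mu_i|\le c_i(t)\ \forall i\,\}$ we have $\mu_i^+(t)\ge\mu_i$ for every $i$, so Assumption~\ref{asm:rmonotonicity} gives $r(S^*,\bm{\mu})\le r(S^*,\bm{\mu}^+(t))=U_t(S^*)$ and the first summand is nonpositive. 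Off $\mathcal{E}_t$ the first summand equals $r(S^*,\bm{\mu})-r(S^*,\bm{\mu}^+(t))$, which Assumption~\ref{asm:rsmoothness} bounds by $B|\tilde{S}^*|\le Bm$; a Hoeffding bound on each base arm then makes $\sum_t\Pr(\mathcal{E}_t^c)$ negligible, so the entire first sum is absorbed into the additive $O(m^2B)$ term together with the initialization rounds discussed below.

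The main work is the second sum. On $\mathcal{E}_t$, Assumption~\ref{asm:rsmoothness} gives $U_t(S(t))-r(S(t),\bm{\mu})\le B\sum_{i\in\tilde{S}(t)}|\mu_i^+(t)-\mu_i|\le 2B\sum_{i\in\tilde{S}(t)}c_i(t)$, reducing the problem to bounding $\sum_t\sum_{i\in\tilde{S}(t)}c_i(t)$. Here lies the difficulty specific to probabilistically triggered arms: $c_i(t)$ decays in the number of \emph{observations} $N_i(t)$, whereas the outer sum runs over rounds in which $i$ is only \emph{potentially} triggered, i.e.\ $i\in\tilde{S}(t)$. Writing $T_i(t)\coloneqq\sum_{\tau<t}\In\{i\in\tilde{S}(\tau)\}$, I would introduce a second good event on which $N_i(t)\ge(1-\rho)p_iT_i(t)$ for all sufficiently large $T_i(t)$; since, conditioned on $i\in\tilde{S}(\tau)$, arm $i$ is triggered with probability at least $p_i$, a multiplicative Chernoff bound at each checkpoint $T_i(t)=n$ makes the complementary event hold with probability decaying like $\exp(-\Theta(\rho^2p_in))$. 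Summing over checkpoints leaves at most $O(1/(\rho^2p_i))$ under-observed rounds for arm $i$ in expectation; charging the crude per-round gap $Bm$ to each such round and summing over arms (with $p_i\ge p^*$) produces the additive term $\tfrac{m^2B}{\rho^2p^*}$ (before averaging over $\bm{\mu}$), while on the good event $c_i(t)=\Theta(\sqrt{\log T/((1-\rho)p_iT_i(t))})$.

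Finally, on the intersection of both good events the telescoping estimate $\sum_{s=1}^{T_i(T)}s^{-1/2}\le 2\sqrt{T_i(T)}$ gives
\begin{align*}
    \sum_{t}\sum_{i\in\tilde{S}(t)}c_i(t) \le \Theta(\sqrt{\log T})\sum_{i=1}^m\frac{2\sqrt{T_i(T)}}{\sqrt{(1-\rho)p^*}} ~,
\end{align*}
and Cauchy--Schwarz with $\sum_iT_i(T)\le mT$ bounds $\sum_i\sqrt{T_i(T)}$ by $m\sqrt{T}$, yielding the leading $\Theta(mB\sqrt{T\log T/((1-\rho)p^*)})$ term. Because the triggering probabilities $p_i^S$---hence $p^*$---are functions of the random environment $\bm{\mu}$, every conditional bound above must be derived for fixed $\bm{\mu}$ and only then averaged, which is exactly what turns the $1/\sqrt{p^*}$ and $1/p^*$ factors into $\Ex_{\bm{\mu}}[\sqrt{1/p^*}]$ and $\Ex_{\bm{\mu}}[1/p^*]$. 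I expect the triggering-concentration step---making rigorous the passage from potential triggerings $T_i(t)$ to actual observations $N_i(t)$, including the initial phase where $T_i(t)$ is too small for concentration and the bookkeeping of its failure probability against the per-round gap $Bm$---to be the main obstacle, since it is precisely the part that departs from the standard single-arm posterior-sampling analysis.
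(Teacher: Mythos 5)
Your proposal is correct and follows essentially the same route as the paper's proof: the Russo--Van Roy posterior-sampling decomposition with the index $U(S,t)=r(S,\bm{\bar{\mu}}(t))$, monotonicity (Assumption~\ref{asm:rmonotonicity}) making the $S^*$ term nonpositive on the mean-concentration event, and a multiplicative Chernoff bound for the triggering event $N_i(t)\geq(1-\rho)p_i M_i(t)$ (your $T_i(t)$ is the paper's $M_i(t)$), whose failures are charged a crude per-round gap to yield the $O(m^2B\,\Ex_{\bm{\mu}}[1/p^*]/\rho^2)$ term, with all bounds derived for fixed $\bm{\mu}$ before averaging exactly as you note. The only cosmetic deviations are clipping the index at $1$ rather than clipping the radius, and using Cauchy--Schwarz on $\sum_i\sqrt{T_i(T)}$ where the paper simply sums $\sum_{w=1}^{T}w^{-1/2}\leq 2\sqrt{T}$ per arm; both give the same leading term.
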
}

As mentioned in Section~\ref{sec:problem-regret}, the Bayesian regret bound in Theorem~\ref{thm:bayesian} can be interpreted as a gap-free regret bound for CTS that holds asymptotically.

    \subsection{Bayesian Regret of CTS under the TPM Lipchitz Continuity}
    \label{sec:theorem-bayesiantpm}
    \reva{\begin{theorem} \label{thm:bayesiantpm}
    Under Assumptions~\ref{asm:independence}, \ref{asm:rdependence}, \ref{asm:rsmoothnesstpm}, and \ref{asm:rmonotonicity}, when averaged over $D$, the Bayesian regret of CTS by round $T$ is bounded as
    \begin{align*}
        \mathrm{BayReg}(T) &\leq 16mB'(1+\sqrt{2})\sqrt{(1+4\log T)T} \\
        &\hspace{12pt} +4mB'+8m^2B'
    \end{align*}
    where $B'$ is the Lipschitz constant in Assumption~\ref{asm:rsmoothnesstpm}.
\end{theorem}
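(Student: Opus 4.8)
The plan is to adapt the posterior-matching decomposition of Russo--Van Roy that underlies the proof of Theorem~\ref{thm:bayesian}, but to bound the confidence width using the triggering-probability-modulated estimate of Assumption~\ref{asm:rsmoothnesstpm} instead of the plain $\ell_1$ Lipschitz estimate; this single change is what eliminates every $1/p^*$ factor. Concretely, let $N_i(t)$ be the number of rounds before $t$ in which base arm $i$ was observed, form empirical means $\hat\mu_i(t)$ and an upper confidence index $\mu_i^+(t)\coloneqq\min\{1,\hat\mu_i(t)+\mathrm{rad}_i(t)\}$ with $\mathrm{rad}_i(t)\asymp\sqrt{(1+4\log T)/(2N_i(t))}$, and define the UCB reward $U_t(S)\coloneqq r(S,\bm\mu^+(t))$. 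Because $U_t$ is $\mathcal F_{t-1}$-measurable and, by posterior sampling, $S(t)$ and $\mathrm{Oracle}(\bm\mu)$ are identically distributed given $\mathcal F_{t-1}$, we have $\Ex[U_t(\mathrm{Oracle}(\bm\mu))]=\Ex[U_t(S(t))]$. Using also $r(S^*,\bm\mu)=r(\mathrm{Oracle}(\bm\mu),\bm\mu)$ (both are optimal), this yields the per-round decomposition
\begin{align*}
\Ex[r(S^*,\bm\mu)-r(S(t),\bm\mu)] = \underbrace{\Ex[r(S^*,\bm\mu)-U_t(\mathrm{Oracle}(\bm\mu))]}_{(A)} + \underbrace{\Ex[U_t(S(t))-r(S(t),\bm\mu)]}_{(B)} ~.
\end{align*}

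For term $(A)$, I would introduce the good event $E_t$ on which $\mu_i\le\mu_i^+(t)$ for all $i$; the monotonicity Assumption~\ref{asm:rmonotonicity} then forces $r(S^*,\bm\mu)=r(\mathrm{Oracle}(\bm\mu),\bm\mu)\le r(\mathrm{Oracle}(\bm\mu),\bm\mu^+(t))=U_t(\mathrm{Oracle}(\bm\mu))$, so $(A)\le0$ on $E_t$ and is at most $\Delta_{\max}\Pr(E_t^c)$ otherwise. A Chernoff/Hoeffding bound controls $\Pr(E_t^c)$, and summing over $t$ together with the crude estimate $\Delta_{\max}\le 2mB'$ (apply Assumption~\ref{asm:rsmoothnesstpm} against the all-zeros parameter $\bm{0}$) produces the constant terms $4mB'+8m^2B'$. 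For term $(B)$, on $E_t$ I use $\mu_i^+(t)-\mu_i\le 2\,\mathrm{rad}_i(t)$ together with Assumption~\ref{asm:rsmoothnesstpm} to obtain
\begin{align*}
(B)\le 2B'\,\Ex\Big[\sum_{i\in\tilde S(t)} p_i^{D,S(t)}\,\mathrm{rad}_i(t)\Big] + \Delta_{\max}\Pr(E_t^c) ~,
\end{align*}
so that, after summing, everything reduces to controlling $\sum_{t=1}^T\Ex[\sum_i p_{i,t}\,\mathrm{rad}_i(t)]$, where $p_{i,t}\coloneqq p_i^{D,S(t)}\In\{i\in\tilde S(t)\}$.

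The heart of the proof, and the step I expect to be the main obstacle, is bounding this TPM-weighted sum of confidence widths \emph{independently of} $p^*$. The device I would use is the exact identity $\Ex[p_{i,t}/\sqrt{N_i(t)}]=\Ex[\In\{i\text{ triggered at }t\}/\sqrt{N_i(t)}]$, which holds because $N_i(t)$ is $\mathcal F_{t-1}$-measurable and $\Ex[\In\{i\text{ triggered at }t\}\mid\mathcal F_{t-1},S(t)]=p_{i,t}$. Replacing the expected triggering $p_{i,t}$ by the \emph{realized} triggering indicator makes the sum telescope: restricting to rounds with $N_i(t)\ge1$, the realized triggerings drive $N_i(t)$ through consecutive integers, whence $\sum_t\In\{i\text{ triggered}\}/\sqrt{N_i(t)}\le\sum_{n=1}^{N_i(T)}1/\sqrt n\le 2\sqrt{N_i(T)}\le 2\sqrt T$, while the at most one round per arm with $N_i(t)=0$ (where the index is capped at $1$) contributes only a bounded constant. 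Carrying the $\sqrt{(1+4\log T)/2}$ factor from $\mathrm{rad}_i(t)$ through and summing over the $m$ base arms then yields the leading term $16mB'(1+\sqrt2)\sqrt{(1+4\log T)T}$. The subtle points to get right are the $N_i(t)=0$ boundary contributions and verifying that the realized-triggering substitution is legitimate even though $p_{i,t}$ depends on the Thompson sample only through $S(t)$; crucially, it is exactly this substitution that sidesteps the concentration-of-$N_i$ argument which would otherwise reintroduce a $1/p^*$ factor and recover only the weaker bound of Theorem~\ref{thm:bayesian}.
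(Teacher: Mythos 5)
Your proposal is correct and proves the stated bound, but the core step is genuinely different from the paper's. Both proofs start from the same Russo--Van Roy decomposition (Fact~\ref{fct:decomp}), handle the optimal-arm term via monotonicity (Assumption~\ref{asm:rmonotonicity}) exactly as in Theorem~\ref{thm:bayesian}, and apply Assumption~\ref{asm:rsmoothnesstpm} to reduce the selected-arm term to a TPM-weighted sum of confidence widths $\sum_t \sum_{i\in\tilde S(t)} p_i^{D,S(t)}\,\mathrm{rad}_i(t)$. Where you diverge is in how this sum is controlled without $1/p^*$: the paper imports the \emph{triggering probability groups} of \cite{wang2017improving}, bucketing super arms into $\mathcal{I}_{i,j}=\{S:2^{-j}<p_i^S\leq 2\cdot 2^{-j}\}$ with counters $M_{i,j}(t)$, replacing $\mathcal{H}_i(t)$ by the event $\{\max\{N_i(t),8\log T\}\leq \tfrac{1}{2}2^{-j_i^{S(t)}}M_{i,j_i^{S(t)}}(t)\}$, proving via Chernoff (Lemma~\ref{lmm:eventhtpm}) that this event has probability at most $1/T$, and then summing $\min\{2^{-j},\sqrt{(4+16\log T)2^{-j}/w}\}$ over groups, which is where the geometric series $\sum_j 2^{-j/2}$ produces the factor $1+\sqrt{2}$ and the $\sum_j 2^{-j}$ cap produces the $4mB'$ term. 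You instead use the tower-property identity $\Ex[p_i^{D,S(t)}\,\mathrm{rad}_i(t)]=\Ex[\In\{i\in S'(t)\}\,\mathrm{rad}_i(t)]$ (valid since $\mathrm{rad}_i(t)$ is $\mathcal{F}_{t-1}$-measurable and $\Pr(i\in S'(t)\mid D,\mathcal{F}_{t-1},S(t))=p_i^{D,S(t)}$) and let the realized triggerings telescope through $N_i(t)=0,1,2,\ldots$ directly. This is sound, entirely sidesteps the grouping machinery and the Chernoff comparison of $N_i$ against $M_{i,j}$, and in fact yields a sharper leading constant (roughly $2\sqrt{2}\,mB'\sqrt{(1+4\log T)T}$ versus the paper's $16(1+\sqrt{2})$), so it establishes the theorem a fortiori; what the paper's grouping buys in exchange is a device that transfers to settings where one cannot pass realized triggers through an expectation, e.g.\ the high-probability and gap-dependent analyses elsewhere in this literature. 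Two small repairs when writing yours up: the good event $E_t$ must be two-sided (you use both $\mu_i\leq\mu_i^+(t)$ for term $(A)$ and $\mu_i^+(t)-\mu_i\leq 2\,\mathrm{rad}_i(t)$ for term $(B)$, so define it as $|\hat\mu_i(t)-\mu_i|\leq\mathrm{rad}_i(t)$ and take the union bound over the random values of $N_i(t)$ as in Fact~\ref{fct:eventg}, which is why the paper's radius carries the constant $2+6\log T$); and in the off-event rounds the correct crude per-round bound is $|U_t(S)-r(S,\bm\mu)|\leq 2mB'$ via Assumption~\ref{asm:rsmoothnesstpm} with $p_i^{D,S}\leq 1$, not $\Delta_{\max}$, which compares two rewards at the same parameter vector.
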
}

We improve the Bayesian regret bound in Theorem~\ref{thm:bayesian} under the stricter TPM Lipchitz continuity assumption and obtain a regret bound that is completely-free of triggering probabilities. Similar to Theorem~\ref{thm:bayesian}, the Bayesian regret bound in Theorem~\ref{thm:bayesiantpm} can be interpreted as an asymptotic regret bound for CTS.

    \subsection{Regret of CTS for Strictly Positive Triggering Probabilities}
    \label{sec:theorem-positive}
    We improve the regret bound in Theorem~\ref{thm:main} when all triggering probabilities are strictly positive.

\begin{theorem} \label{thm:positive}
    Under Assumptions~\ref{asm:independence}, \ref{asm:rdependence}, and \ref{asm:rsmoothness}, for all $D$ such that $\forall i\in[m],S\in\mathcal{I}, p_i^{D,S}\geq p^*>0$, the regret of CTS by round $T$ is bounded as
   \begin{align*}
        \mathrm{Reg}(T) &\leq \max\left\{ 16mB\sqrt{\frac{e}{(1-\rho)p^*}}, \right. \\
        &\hspace{12pt} \max_{S\in\mathcal{I}-\mathrm{OPT}}\left\{ \frac{128mB^2|\tilde{S}|}{(1-\rho)p^*(\Delta_S-2B(\tilde{k}^{*2}+2)\varepsilon)} \right. \\
        &\hspace{24pt} \times \left.\left. \log\frac{4B|\tilde{S}|}{(1-\rho)p^*(\Delta_S-2B(\tilde{k}^{*2}+2)\varepsilon)} \right\}\right\} \\
        &\hspace{12pt} + \left( 5+\frac{\tilde{K}^2}{(1-\rho)p^*\varepsilon^2}+\frac{2\mathbb{I}\{p^*<1\}}{\rho^2p^*} \right) m\Delta_{\max} \\
        &\hspace{12pt} + \alpha\frac{8\tilde{k}^*}{p^*\varepsilon^2} \left(\frac{4}{\varepsilon^2}+1\right)^{\tilde{k}^*} \log\frac{\tilde{k}^*}{\varepsilon^2} \Delta_{\max}
    \end{align*}
    for all $\rho\in(0,1)$, and for all $\varepsilon\in(0,1/\sqrt{e}]$ such that $\forall S\in\mathcal{I}-\text{OPT},\: \Delta_S>2B(\tilde{k}^{*2}+2)\varepsilon$, where $B$ is the Lipschitz constant in Assumption~\ref{asm:rsmoothness}, $\alpha >0$ is a problem independent constant that is also independent of $T$, and $\tilde{K}\coloneqq\max_{S\in\mathcal{I}}|\tilde{S}|$ is the maximum triggering set size among all super arms.
\end{theorem}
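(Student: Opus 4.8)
The plan is to build directly on the proof of Theorem~\ref{thm:main}, isolating the single place where the horizon $T$ enters and replacing it using the strictly positive triggering hypothesis. Writing $\Delta'_S := \Delta_S - 2B(\tilde{k}^{*2}+2)\varepsilon$, every term in the decomposition behind Theorem~\ref{thm:main} is already independent of $T$ except the leading sum $\sum_{i=1}^m \max_S \frac{16B^2|\tilde{S}|\log T}{(1-\rho)p_i\Delta'_S}$, whose $\log T$ comes from fixing a per-round confidence level of order $1/T$ so that the empirical means and the Beta samples concentrate uniformly over all $T$ rounds. I would import the two additive $T$-free terms of Theorem~\ref{thm:main} essentially verbatim (they reappear up to the harmless change of the constant $3$ to $5$) and re-derive only this leading term.

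The structural observation that unlocks the improvement is that when $p_i^{D,S} \ge p^*$ for every base arm $i$ and every super arm $S$, each base arm is triggered with probability at least $p^*$ in \emph{every} round, regardless of which super arm CTS plays. Hence the observation count $N_i(t)$ stochastically dominates a $\mathrm{Binomial}(t,p^*)$ variable, and a multiplicative Chernoff bound gives $\Pr[N_i(t) < \tfrac12 p^* t] \le e^{-p^* t/8}$. First I would sum these failure probabilities over $t$; since they decay geometrically in $p^* t$, their total contribution to the regret is $O(\Delta_{\max}/p^*)$ and folds into the additive $T$-free terms (this is what bumps the constant from $3$ to $5$). On the complementary event $N_i(t) \ge \tfrac12 p^* t$ I would substitute $N_i \ge \tfrac12 p^* t$ into the same concentration and Beta anti-concentration estimates used for Theorem~\ref{thm:main}, so that the per-round probability of selecting a suboptimal super arm now decays in $t$ instead of being pinned to a fixed, horizon-dependent threshold.

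The heart of the argument — and the step I expect to be the main obstacle — is turning this decaying per-round bound into a count that never references $T$. For a suboptimal super arm $S$, let $n_S(\delta)$ be the number of observations per base arm that suffices to rule $S$ out with per-round failure probability at most $\delta$; the analysis of Theorem~\ref{thm:main} gives $n_S(\delta) = O((\Delta'_S)^{-2}\log(1/\delta))$. Because observations accrue at rate $p^*$ in every round, $S$ can be played only while its arms remain below this threshold, i.e.\ only during the first $O(n_S(\delta)/p^*)$ rounds, and this is the horizon over which the failure events must be summable. Choosing $\delta$ of order $p^*/n_S(\delta)$ balances the two requirements and closes a self-referential inequality $n_S \lesssim (\Delta'_S)^{-2}\log(n_S/p^*)$, which I would solve with the elementary fact that $x \le a\log(bx)$ implies $x = O(a\log(ab))$. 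This yields $n_S = O((\Delta'_S)^{-2}\log\frac{1}{p^*\Delta'_S})$, hence $O(\frac{1}{p^*(\Delta'_S)^2}\log\frac{1}{p^*\Delta'_S})$ bad rounds; weighting by the per-round regret $\Delta_S$ collapses one power of $\Delta'_S$ and produces the leading term $O(\frac{1}{p^*\Delta'_S}\log\frac{1}{p^*\Delta'_S})$. Separating this gap-limited regime from the regime limited purely by the time the observation counts need to concentrate (which contributes the $\sqrt{1/p^*}$ branch) yields the outer $\max\{\cdot,\cdot\}$ in the statement.

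Finally I would reassemble the bound from its three pieces: the new leading $\max$-term, the geometric-tail contribution folded into the additive constant, and the two $T$-free terms carried over from Theorem~\ref{thm:main}. The delicate bookkeeping — and the place where it is easiest to slip — is calibrating the round-dependent confidence radius in the fixed-point step so that it is simultaneously loose enough to be met by the $\tfrac12 p^* t$ observations available at round $t$ and tight enough to make the tail of per-round failure probabilities summable, all while keeping $n_S$ in the clean $\log\frac{1}{p^*\Delta'_S}$ form and matching the explicit constants (the $128$, the extra factor $m$, and the argument of the logarithm) in the statement.
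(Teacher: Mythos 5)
Your overall route is the same as the paper's: keep the decomposition behind Theorem~\ref{thm:main} intact, observe that under the hypothesis $p_i^{D,S}\geq p^*$ for all $i,S$ every triggering set is all of $[m]$ (so $M_i(t)=t-1$ and, by the multiplicative Chernoff bound of Fact~\ref{fct:chernoff}, $N_i(t)\geq(1-\rho)p^*(t-1)$ up to a summable failure mass --- this is where the $(1-\rho)p^*$ factors in the stated bound come from, and it is the same mechanism as the event $\mathcal{H}_i(t)$ already used in Section~\ref{sec:proof-bayesian}), and then close a self-bounding inequality of the form $x\leq a\log(bx)$ to trade $\log T$ for $\log\frac{4B|\tilde{S}|}{(1-\rho)p^*(\Delta_S-2B(\tilde{k}^{*2}+2)\varepsilon)}$, with the outer $\max$ absorbing the regime where the logarithm is small. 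That reconstruction, including the fixed-point fact $x\leq a\log(bx)\Rightarrow x=O(a\log(ab))$, is the correct skeleton.

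There is, however, one step that fails as literally written. You say that because $S$ ``can be played only while its arms remain below this threshold,'' the failure events need to be summable only over the first $O(n_S(\delta)/p^*)$ rounds, and you then fix a \emph{constant} per-round failure level $\delta\approx p^*/n_S$. This is circular: a suboptimal $S$ is played after the threshold round precisely \emph{on} the failure events, so those events must be controlled over the entire (unbounded) horizon; with a constant $\delta$ their expected count is $\delta\,(T-O(n_S/p^*))$, linear in $T$, and horizon dependence returns. The repair is the mechanism you gesture at earlier but then abandon in the fixed-point paragraph: take a confidence radius of order $\sqrt{\log N_i(t)/N_i(t)}$ (or $\sqrt{\log t/N_i(t)}$), so that once $N_i(t)\geq(1-\rho)p^*(t-1)$ the per-round failure probability decays polynomially in $t$ and the infinite-horizon failure mass is $O(1/p^*)$; the fixed point then enters only through the sample threshold, producing the clean $\log\frac{1}{p^*\Delta'}$ form. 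A related bookkeeping slip: the geometric Chernoff tail you sum contributes $O(m\Delta_{\max}/p^*)$ rounds, so it cannot be folded into the flat bump from $3$ to $5$ (which carries no $1/p^*$ factor); it belongs with the $\frac{2\In\{p^*<1\}}{\rho^2p^*}m\Delta_{\max}$-type terms, while the $+2$ must come from events of total probability mass $O(1)$ (e.g., bounds of the form $T\Pr(\cup_t\mathcal{G}_i(t))\leq 1$).
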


Note that having all triggering probabilities be strictly positive makes the exploration aspect of the MAB problem trivial. No matter which actions the learner takes, all base arms provide occasional feedback. As a result of this, the upper bound for the expected regret becomes independent of the time horizon $T$. We compare the result of Theorem~\ref{thm:positive} with \cite{saritac2017combinatorial}, which shows a similar bound for CTS in the exact same setting. While the bound in \cite{saritac2017combinatorial} is on order $O((1/p^*)^4)$ with respect to $p^*$, the bound in Theorem~\ref{thm:positive} is on order $O(1/p^*\log(1/p^*))$.

As a final remark, we observe that the regret bound in Theorem~\ref{thm:positive} does not match the lower bound on order $\Omega(\log(1/p^*))$ given in Theorem~1 in \cite{degenne2018bandits} proven for a special case of our setting, where rewards only depend on the selected arm. Assumptions~\ref{asm:rsmoothness} and \ref{asm:rsmoothnesstpm}, on the other hand, allow rewards to depend on all arms in the triggering set of the selected super arm either independent of or proportionally to their triggering probabilities. Considering how the reward model in \cite{degenne2018bandits} satisfies both Assumption~\ref{asm:rsmoothness} and Assumption~\ref{asm:rsmoothnesstpm} and how Assumption~\ref{asm:rsmoothnesstpm} is necesary to get rid of the $1/p^*$ terms in the previously discussed upper bounds, showing an upper bound on order $O(\log(1/p^*))$ instead of order $O(1/p^*\log(1/p^*))$ for the case with strictly positive triggering probabilities might only be possible under Assumption~\ref{asm:rsmoothnesstpm}. The proof of Theorem~\ref{thm:positive} is given in the supplemental document.

\section{\reva{Proof of Theorem~\ref{thm:bayesian}}}
\label{sec:proof-bayesian}
We extend the proof technique used in \cite{russo2014learning} to CMAB-PTA. The technique relies on Fact~\ref{fct:decomp}, which establishes a relationship between Thompson sampling and upper confidence sequences commonly encountered in UCB-based analyses. According to Fact~\ref{fct:decomp}, the Bayesian regret is bounded by the difference between the true rewards and an upper confidence bound for the estimated rewards of the selected super arm and the optimal super arm. We show that these differences either shrink quickly as sample size increases (for the selected super arm) or are less than zero (for the optimal super arm) with overwhelming probability.

\subsection{Preliminaries} \label{sec:proof-bayesian-preliminaries}

All equalities and inequalities concerning random variables hold with probability $1$. The complement of set $\mathcal{S}$ is denoted by $\neg\mathcal{S}$. The indicator function is given as $\In\{\cdot\}$. $M_i(t)\coloneqq\sum_{\tau=1}^{t-1} \In\{i\in\tilde{S}(\tau)\}$ denotes the number of times base arm $i$ is tried to be triggered (i.e.\ it was in the triggering set of the selected super arm) until round $t$, $N_i(t)\coloneqq\sum_{\tau=1}^{t-1}\In\{i\in S'(\tau)\}$ denotes the number of times base arm $i$ is triggered until round $t$, and $\hat{\mu}_i(t)\coloneqq\sum_{\tau:\tau<t,i\in S'(\tau)} Y_i(\tau)/N_i(t)$ denotes the empirical mean outcome of base arm $i$ at the start of round $t$, where $Y_i(t)$ is the Bernoulli random variable with mean $X_i(t)$ that is used for updating the posterior distribution that corresponds to base arm $i$ in CTS.

Given a particular base arm $i\in[m]$, let $\tau_w^i$ be the round for which base arm $i$ is in the triggering set $\tilde{S}(t)$ of the selected super arm $S(t)$ for the $w$th time and let $\tau_0^i=0$. Note that we have $i\in\tilde{S}(\tau_{w+1}^i)$ and $M_i(\tau_{w+1}^i)=w$ for all $w\geq 0$. In order to decompose the regret, we make use of an upper confidence bound sequence $U(S,t)\coloneqq r(S,\bm{\bar{\mu}}(t))$ for the reward of super arm $S\in\mathcal{I}$ at round $t$, where $\bm{\bar{\mu}}(t)=(\bar{\mu}_1(t),\ldots,\bar{\mu}_m(t))$ and
\begin{align*}
    \bar{\mu}_i(t) = \hat{\mu}_i(t)+\min\left\{1,\sqrt{\frac{2+6\log T}{N_i(t)}}\right\} ~.
\end{align*}
We also make use of the following events:
\begin{align*}
    \mathcal{G}_i(t) &\coloneqq \left\{|\hat{\mu}_i(t)-\mu_i|>\min\left\{1,\sqrt{\frac{2+6\log T}{N_i(t)}}\right\}\right\} \\
    \mathcal{G}(t) &\coloneqq \{\exists i\in[m]: \mathcal{G}_i(t)\} \\
    \mathcal{H}_i(t) &\coloneqq \{ i\in\tilde{S}(t),N_i(t)\leq(1-\rho)p_iM_i(t)\} \\
    \mathcal{H}(t) &\coloneqq \{\exists i\in[m]: \mathcal{H}_i(t)\} ~.
\end{align*}

\subsection{Facts and Lemmas}

\begin{fact}{(Proposition 1 in \cite{russo2014learning})} \label{fct:decomp}
    For any upper confidence bound sequence $U(S,t)$,
    \begin{align*}
        \mathrm{BayReg}(T) &= \Ex\left[ \sum_{t=1}^T(U(S(t),t)-r(S(t),\bm{\mu})) \right] \\
        &\hspace{36pt} + \Ex\left[ \sum_{t=1}^T(r(S^*,\bm{\mu})-U(S^*,t)) \right] ~.
    \end{align*}
\end{fact}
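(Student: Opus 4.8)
The plan is to start from the definition $\mathrm{BayReg}(T)=\Ex[\sum_{t=1}^T(r(S^*,\bm{\mu})-r(S(t),\bm{\mu}))]$ and insert the given upper confidence sequence by adding and subtracting both $U(S(t),t)$ and $U(S^*,t)$ inside each summand. This produces the purely algebraic identity
\begin{align*}
r(S^*,\bm{\mu})-r(S(t),\bm{\mu}) &= \bigl(U(S(t),t)-r(S(t),\bm{\mu})\bigr) \\
&\hspace{12pt}+\bigl(r(S^*,\bm{\mu})-U(S^*,t)\bigr) \\
&\hspace{12pt}+\bigl(U(S^*,t)-U(S(t),t)\bigr)~.
\end{align*}
Summing over $t$ and taking the expectation, the first two groups of terms are exactly the two expectations appearing in the claimed decomposition, so it remains only to show that the third \emph{cross term} $\Ex[\sum_{t=1}^T(U(S^*,t)-U(S(t),t))]$ equals zero. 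The entire proof therefore reduces to verifying that this cross term vanishes.

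To show the cross term is zero, I would work round by round and condition on the history $\mathcal{F}_{t-1}$. The key structural observation is that $U(\cdot,t)=r(\cdot,\bm{\bar{\mu}}(t))$ depends on the past only through $\hat{\mu}_i(t)$ and $N_i(t)$, both of which are $\mathcal{F}_{t-1}$-measurable; hence, viewed as a map $S\mapsto U(S,t)$ on $\mathcal{I}$, the upper confidence index becomes a fixed (though history-dependent) deterministic function once $\mathcal{F}_{t-1}$ is given. I would then invoke the \emph{probability matching} property of posterior sampling: conditioned on $\mathcal{F}_{t-1}$, the selected super arm $S(t)$ and the optimal super arm $S^*$ share the same distribution. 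Granting this, for each $t$ we obtain $\Ex[U(S(t),t)\mid\mathcal{F}_{t-1}]=\Ex[U(S^*,t)\mid\mathcal{F}_{t-1}]$, since both are expectations of the same deterministic function $U(\cdot,t)$ taken under the same conditional law. Applying the tower rule annihilates each summand of the cross term, and summing over $t$ completes the reduction.

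The main obstacle, and the only genuinely nontrivial step, is establishing the matching identity $\mathcal{L}(S(t)\mid\mathcal{F}_{t-1})=\mathcal{L}(S^*\mid\mathcal{F}_{t-1})$. I would argue it in two parts. First, by conjugacy of the Beta prior with Bernoulli observations together with the base-arm independence in Assumption~\ref{asm:independence}, the counters $(a_i,b_i)$ maintained by Algorithm~\ref{alg:CTS} are precisely the parameters of the posterior of $\mu_i$ given $\mathcal{F}_{t-1}$; hence the sample $\bm{\theta}(t)$ drawn in round $t$ is a draw from the posterior law of the true parameter $\bm{\mu}$ given $\mathcal{F}_{t-1}$, and it is drawn independently of $\bm{\mu}$ conditional on that history, so $\bm{\theta}(t)$ and $\bm{\mu}$ are conditionally identically distributed given $\mathcal{F}_{t-1}$. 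Second, $S(t)=\mathrm{Oracle}(\bm{\theta}(t))\in\mathrm{OPT}(\bm{\theta}(t))$ while $S^*$ is obtained from $\bm{\mu}$ by the same measurable optimal-selection rule (an element of $\mathrm{OPT}(\cdot)$ with the fixed tie-break of minimal triggering-set cardinality); applying one and the same measurable map to two vectors with identical conditional law yields two super arms with identical conditional law, which is exactly the matching identity. The delicate point is tie-breaking consistency, and I would dispatch it by noting that the Beta posterior is absolutely continuous, so the parameter region on which $\mathrm{OPT}(\cdot)$ is multivalued carries posterior probability zero, rendering the specific oracle choice immaterial and making the two conditional laws agree almost surely.
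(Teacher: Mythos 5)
Your proposal is correct and follows essentially the same route as the paper: the paper's proof likewise adds and subtracts $U(S^*,t)$, observes that $U(\cdot,t)$ is deterministic given $\mathcal{F}_{t-1}$, and uses the probability-matching property that $S(t)=\mathrm{Oracle}(\bm{\theta}(t))$ and $S^*=\mathrm{Oracle}(\bm{\mu})$ are identically distributed conditional on $\mathcal{F}_{t-1}$, then applies the tower rule. The only difference is that the paper sidesteps your tie-breaking discussion entirely by identifying $S^*$ with $\mathrm{Oracle}(\bm{\mu})$ (the same measurable map used for $S(t)$), which is cleaner than your absolute-continuity argument --- note that ties in $\mathrm{OPT}(\cdot)$ need not occur on a Lebesgue-null set in general (e.g., two super arms whose reward functions coincide identically), so fixing a common selection rule is the safer justification.
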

\begin{proof}
    Since $\bm{\theta}(t)$ is sampled from the posterior distribution of $\bm{\mu}$ given observation history $\mathcal{F}_{t-1}$, $S(t)=\mathrm{Oracle}(\bm{\theta}(t))$ and $S^*=\mathrm{Oracle}(\bm{\mu})$ follow the same distribution when  conditioned on $\mathcal{F}_{t-1}$. Together with the fact that $U(S,t)$ is a deterministic function when conditioned on $\mathcal{F}_{t-1}$, we have
    \begin{align*}
        &\Ex[r(S^*,\bm{\mu})-r(S(t),\bm{\mu})] \\
        &= \Ex[\Ex[r(S^*,\bm{\mu})-r(S(t),\bm{\mu})|\mathcal{F}_{t-1}]] \\
        &= \Ex[\Ex[r(S^*,\bm{\mu})-U(S^*,t)+U(S^*,t)-r(S(t),\bm{\mu})|\mathcal{F}_{t-1}]] \\
        &= \Ex[\Ex[r(S^*,\bm{\mu})-U(S^*,t)+U(S(t),t)-r(S(t),\bm{\mu})|\mathcal{F}_{t-1}]] \\
        &= \Ex[r(S^*,\bm{\mu})-U(S^*,t)]+\Ex[U(S(t),t)-r(S(t),\bm{\mu})] ~.
    \end{align*}
    for all $t\in[T]$.
\end{proof}

\begin{fact}{(Lemma 1 in \cite{russo2014learning})} \label{fct:eventg}
    \begin{align*}
        \Pr\left( \bigcup_{t=1}^T\left\{|\hat{\mu}_i(t)-\mu_i|>\min\left\{1,\sqrt{\frac{2+6\log T}{N_i(t)}}\right\}\right\} \right) \leq \frac{1}{T}
    \end{align*}
\end{fact}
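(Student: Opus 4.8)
The plan is to prove Fact~\ref{fct:eventg} by the standard reduction from a union over rounds to a union over the (deterministic) number of collected samples, followed by Hoeffding's inequality. First I would dispose of the rounds in which the confidence radius equals $1$: whenever $\min\{1,\sqrt{(2+6\log T)/N_i(t)}\}=1$ (in particular whenever $N_i(t)=0$), the event is vacuous, since $\hat{\mu}_i(t)$ and $\mu_i$ both lie in $[0,1]$ and so $|\hat{\mu}_i(t)-\mu_i|\leq 1$ cannot strictly exceed the radius. Hence only rounds with $N_i(t)\geq 1$ and radius strictly below $1$ can contribute to the union.

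Next I would re-index the outcomes by their arrival order. Letting $\sigma^i_n$ denote the round in which base arm $i$ is triggered (i.e.\ $i\in S'(\cdot)$) for the $n$th time and writing $\hat{\mu}_{i,n}\coloneqq\frac1n\sum_{l=1}^n Y_i(\sigma^i_l)$ for the empirical mean of the first $n$ triggered outcomes, we have $\hat{\mu}_i(t)=\hat{\mu}_{i,N_i(t)}$ and $N_i(t)\leq T-1$ for every $t\in[T]$. Combining this with the previous paragraph, the event in the statement is contained in $\bigcup_{n=1}^{T-1}\{|\hat{\mu}_{i,n}-\mu_i|>\sqrt{(2+6\log T)/n}\}$ (the small-$n$ terms whose threshold exceeds $1$ being vacuous), so a union bound reduces the problem to controlling each fixed-$n$ deviation.

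For a fixed $n$, I would apply Hoeffding's inequality to $\hat{\mu}_{i,n}$: since each $Y_i(\sigma^i_l)$ takes values in $\{0,1\}$ with mean $\mu_i$, the average of $n$ such terms satisfies $\Pr(|\hat{\mu}_{i,n}-\mu_i|>\varepsilon)\leq 2\exp(-2n\varepsilon^2)$. Plugging in $\varepsilon=\sqrt{(2+6\log T)/n}$ makes the exponent independent of $n$ and yields $2\exp(-2(2+6\log T))=2e^{-4}T^{-12}$ per term. Summing the at most $T-1$ such terms gives $2e^{-4}T^{-11}\leq 1/T$ for all $T\geq 1$, which is the claimed bound.

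The step requiring the most care is the application of Hoeffding in the third paragraph, because the triggering times $\sigma^i_n$ are data-dependent: the algorithm chooses each super arm from past observations and triggering is governed by $D^{\mathrm{trig}}(S(t),\bm{X}(t))$. The point I would need to make rigorous is that, conditioned on the history up to the $n$th triggering of arm $i$, the outcome $Y_i(\sigma^i_n)$ is still a bounded draw with mean $\mu_i$ — i.e.\ that whether arm $i$ is triggered is determined by quantities independent of its own realized outcome $X_i$. Under Assumption~\ref{asm:independence} this lets the triggered outcomes of arm $i$ be treated as a mean-$\mu_i$ sequence whose centered partial sums form a bounded martingale, to which the Azuma--Hoeffding inequality applies with the same constant; this is precisely the CMAB-PTA transport of Lemma~1 in \cite{russo2014learning}, and the remaining computation mirrors theirs once this independence is established.
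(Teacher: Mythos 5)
Your proof is correct and follows essentially the same route as the paper, which simply defers this fact to Lemma~1 of \cite{russo2014learning}: a union bound over the possible sample counts $n\in\{1,\ldots,T-1\}$ (after discarding the vacuous rounds where the radius is $1$ or $N_i(t)=0$), Hoeffding with $\varepsilon=\sqrt{(2+6\log T)/n}$ giving $2e^{-4}T^{-12}$ per term, and summation to $2e^{-4}T^{-11}\leq 1/T$. You also correctly identify and resolve the one genuinely PTA-specific point --- that the triggering times $\sigma_n^i$ are data-dependent, so one must argue (via Assumption~\ref{asm:independence} and the sequential triggering semantics, under which whether arm $i$ is triggered is determined by outcomes of arms other than $i$) that the centered observations form a bounded martingale difference sequence to which Azuma--Hoeffding applies with the same constants.
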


\begin{fact}{(Multiplicative Chernoff bound \cite{chen2016combinatorial,mitzenmacher2005probability})} \label{fct:chernoff}
    Let $X_1,\ldots,X_n$ be Bernoulli random variables taking values in $\{0,1\}$ such that $\Ex[X_t|X_1,\ldots,X_{t-1}]\geq\mu$ for all $t\leq n$, and $Y=X_1+\cdots+X_n$. Then, for all $\delta\in(0,1)$,
    \begin{align*}
        \Pr(Y\leq(1-\delta)\mu n) \leq e^{-\frac{\delta^2\mu n}{2}} ~.
    \end{align*}
\end{fact}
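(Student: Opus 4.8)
The statement is the standard lower-tail multiplicative Chernoff bound, but stated for a conditionally-bounded (martingale-type) collection rather than i.i.d.\ variables, so the plan is to run the usual exponential-moment method while propagating the one-sided conditional-mean constraint through the tower property. First I would fix an arbitrary $s>0$ and apply Markov's inequality to the nonnegative random variable $e^{-sY}$, writing $\Pr(Y\leq(1-\delta)\mu n)=\Pr(e^{-sY}\geq e^{-s(1-\delta)\mu n})\leq e^{s(1-\delta)\mu n}\,\Ex[e^{-sY}]$. The crux is then to control the factor $\Ex[e^{-sY}]=\Ex\bigl[\prod_{t=1}^n e^{-sX_t}\bigr]$.

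To bound it I would peel off the variables one at a time from the outside in. Since each $X_t\in\{0,1\}$ we have the exact identity $e^{-sX_t}=1-(1-e^{-s})X_t$, so conditioning on the past and using $1-e^{-s}>0$ together with the hypothesis $\Ex[X_t\mid X_1,\ldots,X_{t-1}]\geq\mu$ gives $\Ex[e^{-sX_t}\mid X_1,\ldots,X_{t-1}]=1-(1-e^{-s})\Ex[X_t\mid X_1,\ldots,X_{t-1}]\leq 1-(1-e^{-s})\mu\leq\exp(-(1-e^{-s})\mu)$, where the final step is $1+x\leq e^x$. Iterating this conditioning via the tower property over $t=n,n-1,\ldots,1$ yields $\Ex[e^{-sY}]\leq\exp(-(1-e^{-s})\mu n)$, hence $\Pr(Y\leq(1-\delta)\mu n)\leq\exp\bigl(\mu n[s(1-\delta)-(1-e^{-s})]\bigr)$.

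Finally I would optimize the free parameter by choosing $s=-\log(1-\delta)>0$ (equivalently $e^{-s}=1-\delta$), which makes the exponent equal to $\mu n[-(1-\delta)\log(1-\delta)-\delta]$, the logarithm of the familiar bound $\bigl(e^{-\delta}/(1-\delta)^{1-\delta}\bigr)^{\mu n}$. The remaining work is purely analytic: establish the elementary scalar inequality $(1-\delta)\log(1-\delta)\geq-\delta+\delta^2/2$ for $\delta\in(0,1)$, e.g.\ via the Taylor expansion of $\log(1-\delta)$ (the difference of the two sides is $\delta^3/6+\cdots\geq0$) or by checking monotonicity with a vanishing value at $\delta=0$. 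Substituting this collapses the exponent to at most $-\delta^2\mu n/2$, giving the claim.

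I expect the only genuine subtlety to be the dependence structure: because the $X_t$ are not assumed independent but only satisfy the one-sided bound $\Ex[X_t\mid X_1,\ldots,X_{t-1}]\geq\mu$, the factorization of $\Ex[e^{-sY}]$ must be carried out through nested conditional expectations rather than as a naive product, and I must verify that the inequality direction is preserved at every peeling step—this works precisely because $1-e^{-s}$ has a fixed positive sign for $s>0$, so a larger conditional mean can only decrease $\Ex[e^{-sX_t}\mid\text{past}]$. Everything else (Markov's inequality, $1+x\leq e^x$, and the scalar inequality for $\log(1-\delta)$) is routine, so the lemma is essentially bookkeeping once the conditioning is set up correctly.
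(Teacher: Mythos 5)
Your proof is correct: the exponential-moment (Chernoff) argument with the exact identity $e^{-sX_t}=1-(1-e^{-s})X_t$, peeling through nested conditional expectations (valid since $1-e^{-s}>0$ preserves the inequality direction under the hypothesis $\Ex[X_t\mid X_1,\ldots,X_{t-1}]\geq\mu$), the choice $e^{-s}=1-\delta$, and the scalar inequality $(1-\delta)\log(1-\delta)\geq-\delta+\delta^2/2$ all check out. The paper states this as a Fact without proof, citing \cite{chen2016combinatorial,mitzenmacher2005probability}, and your derivation is precisely the standard argument given in those sources, so there is nothing to reconcile.
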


\begin{lemma} \label{lmm:eventh}
    When CTS is run, we have
    \begin{align*}
        \Ex\left[ \sum_{t=1}^T\In\{i\in\tilde{S}(t),N_i(t)\leq(1-\rho)p_iM_i(t)\} \middle|\bm{\mu} \right] \leq 1+\frac{2}{\rho^2p^*}
    \end{align*}
    for all $i\in[m]$, $\bm{\mu}\in[0,1]^m$, and $\rho\in(0,1)$.
\end{lemma}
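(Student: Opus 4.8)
The plan is to observe that the event inside the indicator, namely $\mathcal{H}_i(t)=\{i\in\tilde{S}(t),\,N_i(t)\leq(1-\rho)p_iM_i(t)\}$, can only occur on the rounds where base arm $i$ actually belongs to the triggering set of the selected super arm --- that is, precisely on the rounds $\tau_1^i,\tau_2^i,\ldots$ Hence the sum over $t\in[T]$ collapses to a sum over the at most $M_i(T+1)\leq T$ visits of $i$ to the triggering set, and I would reindex everything by the visit count $w$. Using $M_i(\tau_w^i)=w-1$ and the fact that triggerings occur only on earlier visits, so that $N_i(\tau_w^i)=\sum_{w'=1}^{w-1}Z_{w'}$ with $Z_{w'}\coloneqq\In\{i\in S'(\tau_{w'}^i)\}$, the relevant condition becomes $\sum_{w'=1}^{w-1}Z_{w'}\leq(1-\rho)p_i(w-1)$.

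Next I would isolate the first visit ($w=1$): there $M_i(\tau_1^i)=0$ and the inequality $N_i(\tau_1^i)\leq 0$ holds with probability $1$, contributing exactly the additive $1$ in the stated bound. For every later visit the key structural fact is that whenever $i\in\tilde{S}$, the conditional triggering probability is $p_i^{S}\geq p_i$ by the very definition of $p_i$ as the minimum nonzero triggering probability; thus, conditioned on the history, $\Ex[Z_{w'}]\geq p_i$ regardless of which super arm the oracle returned. This is exactly the hypothesis needed to invoke the multiplicative Chernoff bound of Fact~\ref{fct:chernoff} with $\delta=\rho$, $\mu=p_i$, and $n=w-1$, which yields $\Pr(N_i(\tau_w^i)\leq(1-\rho)p_i(w-1))\leq e^{-\rho^2 p_i(w-1)/2}$.

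Finally I would sum these probabilities over the visits, extending the range to infinity for an upper bound: $\sum_{j\geq 1}e^{-\rho^2 p_i j/2}=1/(e^{\rho^2p_i/2}-1)\leq 2/(\rho^2 p_i)$, using $e^c-1\geq c$, and then bound $p_i\geq p^*$ to conclude that $\Ex[\sum_{t=1}^T\In\{\mathcal{H}_i(t)\}\mid\bm{\mu}]\leq 1+2/(\rho^2 p^*)$.

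I expect the main obstacle to be the rigorous justification of the Chernoff step under adaptivity: the visit times $\tau_w^i$ are themselves random stopping times, and the selected super arm depends on the entire observation history, so the $Z_{w'}$ are neither i.i.d.\ nor independent of the sampling mechanism. The care lies in specifying the correct filtration and verifying the conditional lower bound $\Ex[Z_{w'}\mid Z_1,\ldots,Z_{w'-1}]\geq p_i$, which is precisely what the conditional form of Fact~\ref{fct:chernoff} is built to exploit; the remaining geometric-series estimate is routine.
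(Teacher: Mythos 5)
Your proposal is correct and follows essentially the same route as the paper's proof: both collapse the sum over rounds to the visit times $\tau_w^i$ (the first visit, where $M_i(\tau_1^i)=0$, supplying the additive $1$), invoke the conditional multiplicative Chernoff bound of Fact~\ref{fct:chernoff} with $\delta=\rho$ and conditional triggering mean at least $p_i$---precisely the adaptivity-robust hypothesis you flag as the main obstacle---and bound the resulting geometric series by $2/(\rho^2 p^*)$. The only cosmetic difference is that the paper replaces $p_i$ by $p^*$ inside the exponent before summing, whereas you sum with $p_i$ and pass to $p^*\leq p_i$ at the end.
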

\begin{proof}
    \begin{align}
        \hspace{12pt}&\hspace{-12pt} \Ex\left[ \sum_{t=1}^T\In\{i\in\tilde{S}(t),N_i(t)\leq (1-\rho)p_iM_i(t)\} \middle| \bm{\mu}\right] \nonumber \\
        &\leq \Ex\left[ \sum_{w=0}^T\sum_{t=\tau_w^i+1}^{\tau_{w+1}^i}\mathbb{I}\{i\in\tilde{S}(t), \right. \nonumber \\[-18pt]
        &\hspace{96pt} N_i(t)\leq (1-\rho)p_iM_i(t)\} \Bigg| \bm{\mu} \Bigg] \nonumber \\
        &\leq \Ex\left[ \sum_{w=0}^T\In\{N_i(\tau_{w+1}^i)\leq (1-\rho)p_iM_i(\tau_{w+1}^i)\} \middle| \bm{\mu} \right] \nonumber \\
        &\leq 1 + \sum_{w=1}^T \Pr(N_i(\tau_{w+1}^i)\leq (1-\rho)p_iM_i(\tau_{w+1}^i)|\bm{\mu}) \nonumber \\
        &\leq 1 + \sum_{w=1}^T e^{-\frac{\rho^2p^*w}{2}} \label{eqn:eventh-a} \\
        &\leq 1 + \frac{2}{\rho^2p^*} \nonumber
    \end{align}
    where \eqref{eqn:eventh-a} is due to Fact~\ref{fct:chernoff}.
\end{proof}

\subsection{Main Part of the Proof} \label{sec:decomp}

We decompose the Bayesian regret as
\begin{align}
    \hspace{3pt}&\hspace{-3pt} \mathrm{BayReg}(T) \nonumber \\
    &= \Ex\left[ \sum_{t=1}^T(r(S(t),\bm{\bar{\mu}}(t))-r(S(t),\bm{\mu})) \right] \nonumber \\
    &\hspace{36pt} + \Ex\left[ \sum_{t=1}^T(r(S^*,\bm{\mu})-r(S^*,\bm{\bar{\mu}}(t))) \right] \label{eqn:decomp} \\
    &\leq \Ex\left[ \sum_{t=1}^T\In\{\neg\mathcal{G}(t),\neg\mathcal{H}(t)\}(r(S(t),\bm{\bar{\mu}}(t))-r(S(t),\bm{\mu})) \right] \label{eqn:decomp-a} \\
    &\hspace{6pt} + \Ex\left[ \sum_{t=1}^T\In\{\neg\mathcal{G}(t),\neg\mathcal{H}(t)\}(r(S^*,\bm{\mu})-r(S^*,\bm{\bar{\mu}}(t))) \right] \label{eqn:decomp-b} \\  
    &\hspace{6pt} + \Ex\left[ \sum_{t=1}^T\In\{\mathcal{G}(t)\vee\mathcal{H}(t)\}\right] \times 4mB \label{eqn:decomp-c} ~,
\end{align}
where \eqref{eqn:decomp} is due to Fact~\ref{fct:decomp}, and \eqref{eqn:decomp-c} is obtained by observing
\begin{align*}
    \hspace{12pt}&\hspace{-12pt} |r(S,\bm{\mu})-r(S,\bm{\bar{\mu}}(t))| \\
    &\leq B\sum_{i\in\tilde{S}}\left|\mu_i-\hat{\mu}_i(t)-\min\left\{1,\sqrt{\frac{2+6\log T}{N_i(t)}}\right\}\right| \\
    &\leq B\sum_{i\in\tilde{S}}|\mu_i-\hat{\mu}_i(t)| \\[-18pt]
    &\hspace{66pt} + B\sum_{i\in\tilde{S}}\min\left\{1,\sqrt{\frac{2+6\log T}{N_i(t)}}\right\} \\
    &\leq 2mB
\end{align*}
for all $S\in\mathcal{I}$.

\subsubsection{Bounding \eqref{eqn:decomp-a}}
When $\neg\mathcal{G}(t)$ and $\neg\mathcal{H}(t)$ hold, we have
\begin{align}
    \hspace{12pt}&\hspace{-12pt} r(S(t),\bm{\bar{\mu}}(t))-r(S(t),\bm{\mu}) \nonumber \\
    &\leq B\sum_{i\in\tilde{S}(t)}\left|\mu_i-\hat{\mu}_i(t)-\min\left\{1,\sqrt{\frac{2+6\log T}{N_i(t)}}\right\}\right| \nonumber \\
    &\leq B\sum_{i\in\tilde{S}(t)}|\mu_i-\hat{\mu}_i(t)| \nonumber \\[-18pt]
    &\hspace{66pt} + B\sum_{i\in\tilde{S}(t)}\min\left\{1,\sqrt{\frac{2+6\log T}{N_i(t)}}\right\} \nonumber \\
    &\leq 2B\sum_{i\in\tilde{S}(t)}\min\left\{1,\sqrt{\frac{2+6\log T}{N_i(t)}}\right\} \label{eqn:decomp-aa} \\
    &\leq 2B\sum_{i\in\tilde{S}(t)}\min\left\{1,\sqrt{\frac{2+6\log T}{(1-\rho)p_iM_i(t)}}\right\} \label{eqn:decomp-ab} ~,
\end{align}
where \eqref{eqn:decomp-aa} is due to $\neg\mathcal{G}(t)$ and \eqref{eqn:decomp-ab} is due to $\neg\mathcal{H}(t)$. Then,
\begin{align}
    \eqref{eqn:decomp-a} &\leq \Ex\left[ \sum_{t=1}^T \In\{\neg\mathcal{H}(t)\} 2B\sum_{i\in\tilde{S}(t)} \sqrt{\frac{2+6\log T}{(1-\rho)p_iM_i(t)}} \right] \nonumber \\
    &\leq \Ex\left[ \sum_{i=1}^m\sum_{w=0}^T\sum_{t=\tau_w^i+1}^{\tau_{w+1}^i} \In\{i\in\tilde{S}(t),\neg\mathcal{H}(t)\} \right. \nonumber \\[-12pt]
    &\hspace{114pt} \left. \times 2B\sqrt{\frac{2+6\log T}{(1-\rho)p_iM_i(t)}} \right] \nonumber \\
    &\leq \Ex\left[ \sum_{i=1}^m\sum_{w=0}^T \In\{\neg\mathcal{H}(\tau_{w+1}^i)\} 2B\sqrt{\frac{2+6\log T}{(1-\rho)p_iM_i(\tau_{w+1}^i)}} \right] \nonumber \\
    &\leq \Ex\left[ \sum_{i=1}^m\sum_{w=1}^T 2B\sqrt{\frac{2+6\log T}{(1-\rho)p_iM_i(\tau_{w+1}^i)}} \right] \label{eqn:decomp-ac} \\
    &\leq \sum_{i=1}^m\sum_{w=1}^T 2B\sqrt{\frac{2+6\log T}{(1-\rho)w}} \Ex_{\bm{\mu}}\left[\sqrt{\frac{1}{p^*}}\right] \nonumber \\
    &\leq 4mB\sqrt{\frac{T(2+6\log T)}{(1-\rho)}} \Ex_{\bm{\mu}}\left[\sqrt{\frac{1}{p^*}}\right] \label{eqn:decomp-ad} ~,
\end{align}
where \eqref{eqn:decomp-ac} holds since $N_i(\tau_1^i)=M_i(\tau_1^i)=0$ implies $\mathcal{H}(\tau_1^i)$ and \eqref{eqn:decomp-ad} holds since $\sum_{n=1}^N\sqrt{1/n}\leq2\sqrt{N}$.

\subsubsection{Bounding \eqref{eqn:decomp-b}} \label{sec:decomp-b}
When $\neg\mathcal{G}(t)$ holds, we have
\begin{align*}
    \mu_i \leq \hat{\mu}_i(t)+\min\left\{1,\sqrt{\frac{2+6\log T}{N_i(t)}}\right\} = \bar{\mu}(t)
\end{align*}
for all $i\in[m]$. Then,
\begin{align}
    r(S^*,\bm{\mu})-r(S^*,\bm{\bar{\mu}}(t)) &\leq r(S^*,\bm{\bar{\mu}}(t))-r(S^*,\bm{\bar{\mu}}(t)) \label{eqn:decomp-ba} \\
    &= 0 \nonumber ~,
\end{align}
where \eqref{eqn:decomp-ba} is due to Assumption~\ref{asm:rmonotonicity}. Hence, $\eqref{eqn:decomp-b}\leq 0$.

\subsubsection{Bounding \eqref{eqn:decomp-c}}
We have
\begin{align}
    \eqref{eqn:decomp-c} &\leq 4mB\sum_{i=1}^m\left( \Ex\left[\sum_{t=1}^T\In\{\mathcal{G}_i(t)\}\right]+\Ex\left[\sum_{t=1}^T\In\{\mathcal{H}_i(t)\}\right] \right) \nonumber \\
    &\leq 4mB\sum_{i=1}^m\left( T\Pr\left(\bigcup_{t=1}^T\{\mathcal{G}_i(t)\}\right) \right. \nonumber \\
    &\hspace{108pt} \left. + \Ex_{\bm{\mu}}\left[\Ex\left[\sum_{t=1}^T\In\{\mathcal{H}_i(t)\}\middle|\bm{\mu}\right]\right] \right) \nonumber \\
    &\leq 8m^2B\left(1+\frac{1}{\rho^2}\Ex_{\bm{\mu}}\left[\frac{1}{p^*}\right]\right) \label{eqn:decomp-ca} ~,
\end{align}
where \eqref{eqn:decomp-ca} is due to Fact~\ref{fct:eventg} and Lemma~\ref{lmm:eventh} respectively for the two terms.

\section{\reva{Proof of Theorem~\ref{thm:bayesiantpm}}}
\label{sec:proof-bayesiantpm}
In order to take advantage of Assumption~\ref{asm:rsmoothnesstpm}, we use the concept of triggering probability groups from \cite{wang2017improving}. However, the rest of our analysis is quite different from \cite{wang2017improving} and mainly follows the same technique we have followed in Section~\ref{sec:proof-bayesian} when proving Theorem~\ref{thm:bayesian}.

\subsection{Preliminaries}

In addition to the preliminaries in Section~\ref{sec:proof-bayesian-preliminaries} for the proof of Theorem~\ref{thm:bayesian}, we make the following definitions. For $j\in\mathbb{Z}_+$, let $\mathcal{I}_{i,j}\coloneqq\{S\in\mathcal{I}:2^{-j}<p_i^S\leq 2\cdot2^{-j}\}$ denote the $j$th \textit{triggering probability group} of base arm $i$ and let $j_i^S$ denote the index of the triggering probability group of base arm i that super arm $S$ belongs to, i.e., $j_i^S$ is such that $S\in\mathcal{I}_{i,j_i^{S}}$. We use these definitions to introduce the following counters: $M_{i,j}(t)\coloneqq\sum_{\tau=1}^{t-1}\In\{i\in\tilde{S}(\tau),S(\tau)\in\mathcal{I}_{i,j}\}$ and $N_{i,j}(t)\coloneqq\sum_{\tau=1}^{t-1}\In\{i\in S'(\tau),S(\tau)\in\mathcal{I}_{i,j}\}$. By definition, $M_i(t)=\sum_{j=1}^{\infty}M_{i,j}(t)$ and $N_i(t)=\sum_{j=1}^{\infty}N_{i,j}(t)$.

Given a particular base arm $i\in[m]$, let $\eta_w^{i,j}$ be the round for which base arm $i$ is in the triggering set $\tilde{S}(t)$ of the selected super arm $S(t)$ and $S(t)\in\mathcal{I}_{i,j}$ for the $w$th time and let $\eta_0^{i,j}=0$. Note that we have $i\in\tilde{S}(\eta_{w+1}^{i,j})$, $M_{i,j}(\eta_{w+1}^{i,j})=w$, and $S(\eta_{w+1}^{i,j})\in\mathcal{I}_{i,j}$ for all $w\geq 0$. We also make the following change to event $\mathcal{H}_i(t)$:
\begin{align*}
    \mathcal{H}_i(t) &\coloneqq \left\{\max\{N_i(t),8\log T\}\leq\frac{1}{2}\cdot 2^{-j_i^{S(t)}}M_{i,j_i^{S(t)}}(t)\right\} ~.
\end{align*}

\subsection{Facts and Lemmas}

\begin{lemma} \label{lmm:eventhtpm}
        Fix $i\in[m]$, $t\in[T]$ and $j\in\mathbb{Z}_+$. When CTS is run, we have
    \begin{align*}
        \Pr\left( \max\{N_i(t),8\log T\}\leq\frac{1}{2}\cdot 2^{-j}M_{i,j}(t) \right) \leq \frac{1}{T} ~.
    \end{align*}
\end{lemma}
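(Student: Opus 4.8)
The statement I must prove is Lemma~\ref{lmm:eventhtpm}: for fixed $i$, $t$, $j$, the probability that $\max\{N_i(t),8\log T\}\le\tfrac12\cdot 2^{-j}M_{i,j}(t)$ is at most $1/T$. The key intuition is that $2^{-j}$ is (up to a factor of $2$) the triggering probability $p_i^S$ for any $S\in\mathcal{I}_{i,j}$: indeed $S\in\mathcal{I}_{i,j}$ means $2^{-j}<p_i^S\le 2\cdot 2^{-j}$, so on each of the $M_{i,j}(t)$ rounds where $i$ was in the triggering set and $S(\tau)\in\mathcal{I}_{i,j}$, base arm $i$ is actually triggered (contributing to $N_{i,j}(t)\le N_i(t)$) with conditional probability exceeding $2^{-j}$. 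Hence $N_i(t)\ge N_{i,j}(t)$ should concentrate around its mean, which is at least $2^{-j}M_{i,j}(t)$, and the event in question---that $N_i(t)$ falls below \emph{half} of $2^{-j}M_{i,j}(t)$---is a large-deviation event controllable by the multiplicative Chernoff bound (Fact~\ref{fct:chernoff}).

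\textbf{Main steps.} First I would note $N_i(t)\ge N_{i,j}(t)$, so it suffices to bound the probability of the smaller event with $N_{i,j}(t)$ in place of $N_i(t)$. The difficulty with applying Chernoff directly is that $M_{i,j}(t)$ is itself a random stopping-time-like quantity, so I would condition on the sequence of rounds via the $\eta_w^{i,j}$ indexing introduced in the preliminaries. For each fixed value $w=M_{i,j}(t)$, the contributions to $N_{i,j}(t)$ over those $w$ rounds form a sum of Bernoulli trials whose conditional means (given the past) each exceed $2^{-j}$; by Fact~\ref{fct:chernoff} with $\mu=2^{-j}$ and $\delta=\tfrac12$, the probability that this sum is at most $(1-\tfrac12)\,2^{-j}w=\tfrac12\cdot 2^{-j}w$ is at most $e^{-(1/4)\cdot 2^{-j}w/2}=e^{-2^{-j}w/8}$. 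Then I would use the $8\log T$ floor: the event forces $8\log T\le\tfrac12\cdot 2^{-j}M_{i,j}(t)$, i.e.\ $2^{-j}M_{i,j}(t)\ge 16\log T$, so $2^{-j}w\ge 16\log T$ on this event, giving a per-$w$ tail bound of at most $e^{-2\log T}=T^{-2}$.

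\textbf{Putting it together.} Summing the per-$w$ bound over the at most $T$ relevant values of $w$ (since $M_{i,j}(t)\le t-1<T$) yields a total probability of at most $T\cdot T^{-2}=1/T$, which is exactly the claimed bound. The cleanest way to organize this is to write $\Pr(\cdots)\le\sum_{w}\Pr\bigl(N_{i,j}(\eta_{w+1}^{i,j})\le\tfrac12\cdot 2^{-j}w,\ 2^{-j}w\ge 16\log T\bigr)$ and bound each summand as above.

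\textbf{Anticipated obstacle.} The main subtlety is the measurability/martingale bookkeeping needed to legitimately invoke Fact~\ref{fct:chernoff}: the Bernoulli variables indicating whether $i$ is triggered are being collected along a data-dependent subsequence of rounds (those in group $\mathcal{I}_{i,j}$ where $i\in\tilde S$), so I must verify that, conditioned on the history up to $\eta_w^{i,j}$, the indicator of $i\in S'(\eta_{w+1}^{i,j})$ has conditional mean at least $2^{-j}$. This holds because the triggering is governed by $D^{\mathrm{trig}}$ and is independent across rounds given the selected super arm, and $S(\eta_{w+1}^{i,j})\in\mathcal{I}_{i,j}$ guarantees $p_i^{S}>2^{-j}$; hence the conditional-mean hypothesis of the Chernoff bound is met. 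Once this is established, the rest is the deterministic combination of the $8\log T$ floor with the exponent and the union over $w$, all of which is routine.
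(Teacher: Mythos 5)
Your proposal is correct and follows essentially the same route as the paper's proof: replace $N_i(t)$ by $N_{i,j}(t)$, decompose over the values $w$ of $M_{i,j}(t)$, apply the multiplicative Chernoff bound (Fact~\ref{fct:chernoff}) with $\mu=2^{-j}$ and $\delta=\tfrac12$ to get the per-$w$ tail $e^{-2^{-j}w/8}$, use the $8\log T$ floor to reduce this to $T^{-2}$, and sum over the at most $T$ values of $w$. Your discussion of the conditional-mean hypothesis along the $\eta_w^{i,j}$ subsequence is exactly the measurability point the paper leaves implicit, so nothing is missing.
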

\begin{proof}
    \begin{align}
        \hspace{12pt}&\hspace{-12pt} \Pr\left( \max\{N_i(t),8\log T\}\leq\frac{1}{2}\cdot 2^{-j}M_{i,j}(t) \right) \nonumber \\
        &\leq \Pr\left( N_i(t)\leq\frac{1}{2}\cdot 2^{-j}M_{i,j}(t) \middle| 8\log T\leq\frac{1}{2}\cdot 2^{-j}M_{i,j}(t) \right) \nonumber \\
        &\leq \sum_{w=0}^{T-1} \In\left( 8\log T\leq\frac{1}{2}\cdot 2^{-j}w\right) \nonumber \\[-6pt]
        &\hspace{48pt} \times \Pr\left( N_{i,j}(t)\leq\frac{1}{2}\cdot 2^{-j}w \middle| M_{i,j}(t)=w \right) \nonumber \\
        &\leq \sum_{w=0}^{T-1} \In\left( 8\log T\leq\frac{1}{2}\cdot 2^{-j}w\right) e^{-\frac{2^{-j}w}{8}} \label{eqn:eventhtpm-a} \\
        &\leq \sum_{w=0}^{T-1} e^{-2\log T} \label{eqn:eventhtpm-b} \\
        &\leq \frac{1}{T} \nonumber
    \end{align}
    where \eqref{eqn:eventhtpm-a} holds due to Fact~\ref{fct:chernoff} and \eqref{eqn:eventhtpm-b} holds since $8\log T\leq 1/2\cdot 2^{-j}w$ implies that $e^{-2^{-j}w/8}\leq e^{-2\log T}$.
\end{proof}

\subsection{Main Part of the Proof}

We decompose the Bayesian regret the same way as we did in Section~\ref{sec:decomp}. Note that \eqref{eqn:decomp-c} still holds since
\begin{align*}
    \hspace{12pt}&\hspace{-12pt} |r(S,\bm{\mu})-r(S,\bm{\bar{\mu}}(t))| \\
    &\leq B'\sum_{i\in\tilde{S}}p_i^S\left|\mu_i-\hat{\mu}_i(t)-\min\left\{1,\sqrt{\frac{2+6\log T}{N_i(t)}}\right\}\right| \\
    &\leq B'\sum_{i\in\tilde{S}}p_i^S|\mu_i-\hat{\mu}_i(t)| \\[-18pt]
    &\hspace{66pt} + B'\sum_{i\in\tilde{S}}p_i^S\min\left\{1,\sqrt{\frac{2+6\log T}{N_i(t)}}\right\} \\
    &\leq 2mB'
\end{align*}
for all $S\in\mathcal{I}$.

\subsubsection{Bounding \eqref{eqn:decomp-a}}
When $\neg\mathcal{H}(t)$ holds, one of the following must be the case:
\begin{align*}
    8\log T \geq{}& \frac{1}{2}\cdot 2^{-j_i^{S(t)}}M_{i,j_i^{S(t)}}(t) \\
    \implies& 1 \leq \sqrt{\frac{16\log T}{2^{-j_i^{S(t)}}M_{i,j_i^{S(t)}}(t)}} \leq \sqrt{\frac{4+16\log T}{2^{-j_i^{S(t)}}M_{i,j_i^{S(t)}}(t)}} ~, \\
    N_i(t) \geq{}& \frac{1}{2}\cdot 2^{-j_i^{S(t)}}M_{i,j_i^{S(t)}}(t) \\
    \implies& \sqrt{\frac{2+6\log T}{N_i(t)}}\leq \sqrt{\frac{4+12\log T}{2^{-j_i^{S(t)}}M_{i,j_i^{S(t)}}(t)}} \\
    &\hphantom{1 \leq \sqrt{\frac{16\log T}{2^{-j_i^{S(t)}}M_{i,j_i^{S(t)}}(t)}}} \leq \sqrt{\frac{4+16\log T}{2^{-j_i^{S(t)}}M_{i,j_i^{S(t)}}(t)}} ~.
\end{align*}
Combining the two result together, we obtain
\begin{align}
    \min\left\{1,\sqrt{\frac{2+6\log T}{N_i(t)}}\right\} \leq \sqrt{\frac{4+16\log T}{2^{-j_i^{S(t)}}M_{i,j_i^{S(t)}}(t)}} \label{eqn:decomp-aatpm} ~.
\end{align}
When $\neg\mathcal{G}(t)$ also holds, we have
\begin{align}
    \hspace{6pt}&\hspace{-6pt} r(S(t),\bm{\bar{\mu}}(t))-r(S(t),\bm{\mu}) \nonumber \\
    &\leq B'\sum_{i\in\tilde{S}(t)}p_i^{S(t)}\left| \mu_i-\bar{\mu}_i(t)-\min\left\{1,\sqrt{\frac{2+6\log T}{N_i(t)}}\right\} \right| \nonumber \\
    &\leq B'\sum_{i\in\tilde{S}(t)}p_i^{S(t)}|\mu_i-\bar{\mu}_i(t)| \nonumber \\[-18pt]
    &\hspace{66pt} + B'\sum_{i\in\tilde{S}(t)}p_i^{S(t)}\min\left\{1,\sqrt{\frac{2+6\log T}{N_i(t)}}\right\} \nonumber \\
    &\leq 2B'\sum_{i\in\tilde{S}(t)}p_i^{S(t)}\min\left\{1,\sqrt{\frac{2+6\log T}{N_i(t)}}\right\} \label{eqn:decomp-abtpm} \\
    &\leq 2B'\sum_{i\in\tilde{S}(t)}p_i^{S(t)}\min\left\{1,\sqrt{\frac{4+16\log T}{2^{-j_i^{S(t)}}M_{i,j_i^{S(t)}}(t)}}\right\} \label{eqn:decomp-actpm} \\
    &= 4B'\sum_{i\in\tilde{S}(t)}\min\left\{2^{-j_i^{S(t)}},\sqrt{\frac{(4+16\log T)2^{-j_i^{S(t)}}}{M_{i,j_i^{S(t)}}(t)}}\right\} \label{eqn:decomp-adtpm} ~,
\end{align}
where \eqref{eqn:decomp-abtpm} is due to $\neg\mathcal{G}(t)$, \eqref{eqn:decomp-actpm} is due to \eqref{eqn:decomp-aatpm}, and \eqref{eqn:decomp-adtpm} holds since $p_i^{S(t)}\leq 2\cdot 2^{-j_i^{S(t)}}$. Then,
\begin{align}
    \hspace{3pt}&\hspace{-3pt} \eqref{eqn:decomp-a} \nonumber \\[-6pt]
    &\leq \Ex\left[ \sum_{t=1}^T 4B'\sum_{\mathclap{i\in\tilde{S}(t)}} \min\left\{2^{-j_i^{S(t)}}\!,\sqrt{\frac{(4+16\log T)2^{-j_i^{S(t)}}\!}{M_{i,j_i^{S(t)}}(t)}}\right\} \right] \nonumber \\
    &\leq \Ex\left[ \sum_{i=1}^m\sum_{j=1}^{\infty}\sum_{w=0}^T\sum_{t=\eta_w^{i,j}+1}^{\eta_{w+1}^{i,j}} \In\{i\in\tilde{S}(t),S(t)\in\mathcal{I}_{i,j}\} \right. \nonumber \\[-6pt]
    &\hspace{69pt} \times \left. 4B'\min\left\{2^{-j},\sqrt{\frac{(4+16\log T)2^{-j}}{M_{i,j}(t)}}\right\} \right] \nonumber \\
    &\leq \Ex\left[ \sum_{i=1}^m\sum_{j=1}^{\infty}\sum_{w=0}^T 4B'\min\left\{2^{-j},\sqrt{\frac{(4+16\log T)2^{-j}}{M_{i,j}(\eta_{w+1}^{i,j})}}\right\} \right] \nonumber \\
    &\leq \sum_{i=1}^m\sum_{j=1}^{\infty}\left( 4B'\cdot 2^{-j}+\sum_{w=1}^T 4B'\sqrt{\frac{(4+16\log T)2^{-j}}{w}}\right) \nonumber \\
    &\leq 4mB' + 16mB'\sum_{j=1}^{\infty}\sqrt{(1+4\log T)T\cdot 2^{-j}} \label{eqn:decomp-aetpm} \\
    &\leq 4mB' + 16mB'(1+\sqrt{2})\sqrt{(1+4\log T)T} \nonumber ~,
\end{align}
where \eqref{eqn:decomp-aetpm} holds since $\sum_{n=1}^N\sqrt{1/n}\leq2\sqrt{N}$.

\subsubsection{Bounding \eqref{eqn:decomp-b}}
We bound \eqref{eqn:decomp-b} the same way we did in Section~\ref{sec:decomp-b}.

\subsubsection{Bounding \eqref{eqn:decomp-c}}
We have
\begin{align}
    \hspace{3pt}&\hspace{-3pt} \eqref{eqn:decomp-c} \nonumber \\[-6pt]
    &\leq 4mB'\sum_{i=1}^m\left( \Ex\left[\sum_{t=1}^T\In\{\mathcal{G}_i(t)\}\right]+\Ex\left[\sum_{t=1}^T\In\{\mathcal{H}_i(t)\}\right] \right) \nonumber \\
    &\leq 4mB'\sum_{i=1}^m\left( T\Pr\left(\bigcup_{t=1}^T\{\mathcal{G}_i(t)\}\right) + \sum_{t=1}^T\Pr(\mathcal{H}_i(t)) \right) \nonumber \\
    &\leq 4mB'\sum_{i=1}^m\left( 1 + \sum_{t=1}^T\sum_{j=1}^{\infty}\Pr(\mathcal{H}_i(t)|j_i^{S(t)}\!=j)\Pr(j_i^{S(t)}\!=j) \right) \label{eqn:decomp-catpm} \\
    &\leq 4mB'\sum_{i=1}^m\left( 1 + \sum_{t=1}^T\sum_{j=1}^{\infty}\frac{1}{T}\Pr(j_i^{S(t)}=j) \right) \label{eqn:decomp-cbtpm} \\
    &\leq 8m^2B' \nonumber ~,
\end{align}
where \eqref{eqn:decomp-catpm} is due to Fact~\ref{fct:eventg} and \eqref{eqn:decomp-cbtpm} is due to Lemma~\ref{lmm:eventhtpm}.

\section{Numerical Results}
\label{sec:numerical}
In this section, we compare CTS with other state-of-the-art CMAB algorithms in three different applications: cascading bandits, probabilistic maximum coverage bandits, and influence maximization bandits introduced in Section~\ref{sec:networking}. We compare the performance of CTS with CUCB in \cite{chen2016combinatorial} in all settings. For the first two problems, we assume that all algorithms have access to an exact computation oracle that computes the estimated optimal super arm in each round. On the other hand, for the third problem, we assume that all algorithms use an approximation oracle. For cascading bandits only, we also compare CTS with algorithms specifically designed for this setting: CascadeKL-UCB in \cite{kveton2015cascading} and TS-Cascade in \cite{cheung2019thompson}. The former uses the principle of optimism under the face of uncertainty to compute Kullback-Leibler divergence based UCBs while the latter uses Thompson sampling with Gaussian posterior over the base arms. 

\subsection{Cascading Bandits}

We consider the disjunctive case with $V=100$, $W=20$ and $K=5$, and generate $p_{i,j}$s by sampling uniformly at random from $[0,1]$. We run both CTS and CUCB for $1600$ rounds, and report their regrets averaged over $1000$ runs in Fig.~\ref{fig:comparison}, where error bars represent the standard deviation of the regret (multiplied by 10 for visibility). In this setting CTS significantly outperforms CUCB by achieving a final regret that is no more than $5\%$ of the final regret of CUCB. Relatively bad performance of CUCB can be explained by excessive number of explorations due to the UCBs that stay high for a large number of rounds.

\begin{figure}
	\centering
    \includegraphics[height=2.1in]{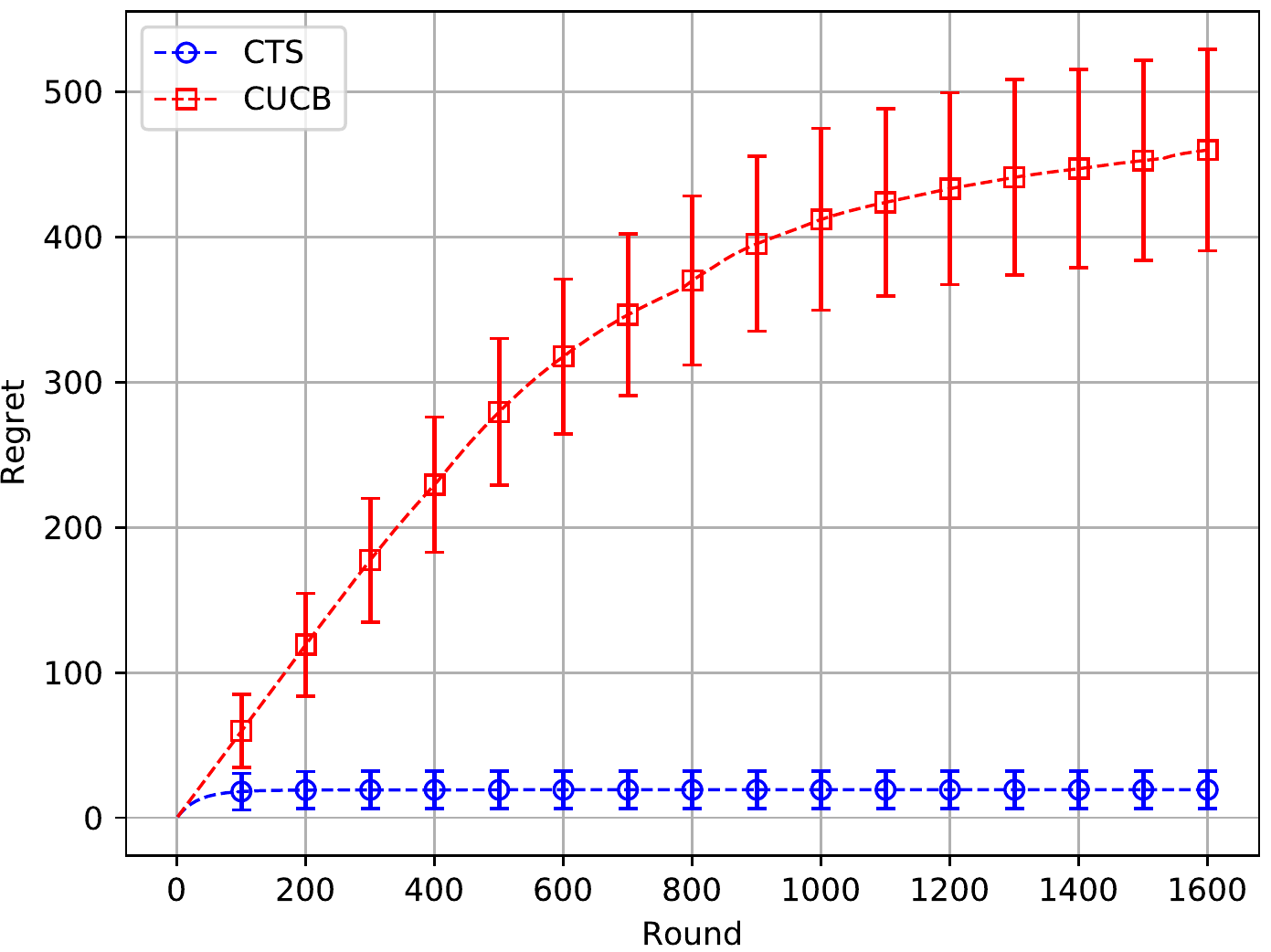}
	\caption{Regrets of CTS and CUCB for the disjunctive cascading bandit problem.}
	\label{fig:comparison}
\end{figure}

We also consider the same class of problems $B_{\mathrm{LB}}(V,K,p,\Delta)$ as in \cite{kveton2015cascading}, where $W=1$ and the probability that the user finds page $j$ attractive is given as
\begin{align*}
    p_{1,j}=\begin{dcases}
        p & \text{if } j\leq K \\
        p-\Delta & \text{otherwise} ~.
    \end{dcases}
\end{align*}
Similar to \cite{kveton2015cascading}, we set $p=0.2$ and vary other parameters, namely $V$, $K$, and $\Delta$. We run both CTS and CUCB for $100000$ rounds in all problem instances, and report their regrets averaged over $20$ runs in Table~\ref{tbl:results}.

\begin{table*}
    \caption{Regrets of CTS and CUCB with their standard deviations for various problem instances.}
    \label{tbl:results}
    \centering
    \begin{tabular}{*3{r}@{\hspace{24pt}}*5{r@{$\,\pm\,$}l}}
        \toprule
        $V$ & $K$ & $\Delta$ & \multicolumn{2}{c}{\textbf{CTS}} & \multicolumn{2}{c}{\textbf{CUCB}} & \multicolumn{2}{c}{\textbf{CascadeUCB1}} & \multicolumn{2}{c}{\textbf{CascadeKL-UCB}} & \multicolumn{2}{c}{\textbf{TS-Cascade}}\\
        \midrule
        16 & 2 & 0.15 & 155.4 & 14.1 & 1284.1 & 52.4 & 1300.6 & 46.8 & 360.6 & 23.4 & 381.1 & 16.8 \\
        16 & 4 & 0.15 & 103.2 & 9.0 & 998.9 & 33.2 & 993.6 & 32.8 & 267.3 & 20.6 & 281.0 & 11.8 \\
        16 & 8 & 0.15 & 52.1 & 9.8 & 549.5 & 16.8 & 546.4 & 11.7 & 150.3 & 15.6 & 137.9 & 8.8 \\
        32 & 2 & 0.15 & 321.4 & 18.9 & 2718.8 & 61.2 & 2676.4 & 59.4 & 749.2 & 34.2 & 752.9 & 49.9 \\
        32 & 4 & 0.15 & 252.2 & 17.0 & 2227.0 & 55.4 & 2232.1 & 46.6 & 617.4 & 39.9 & 612.3 & 15.2 \\
        32 & 8 & 0.15 & 155.4 & 25.7 & 1531.0 & 21.9 & 1525.4 & 30.0 & 420.6 & 27.5 & 385.0 & 16.3 \\
        16 & 2 & 0.075 & 276.9 & 50.7 & 2057.6 & 79.6 & 2065.4 & 87.4 & 709.0 & 60.4 & 688.3 & 78.5 \\
        16 & 4 & 0.075 & 205.4 & 25.7 & 1496.5 & 65.2 & 1512.4 & 87.0 & 546.3 & 53.5 & 557.9 & 45.0 \\
        16 & 8 & 0.075 & 113.1 & 40.4 & 719.4 & 53.7 & 717.5 & 44.2 & 266.1 & 32.4 & 273.8 & 30.7 \\
        \bottomrule
    \end{tabular}
\end{table*}

In addition to CUCB, we compare CTS against CascadeUCB1 and CascadeKL-UCB given in \cite{kveton2015cascading}, and TS-Cascade given in \cite{cheung2019thompson} as well. Note that regrets of CUCB and CascadeUCB1 matches very closely as two algorithms are essentially the same when CUCB is applied to cascading bandits except for some minor differences in the initialization stage and how UCBs larger than 1 are handled. We observe that CTS outperforms all other algorithms in all problem instances by achieving a regret that is at most $44\%$ of the regret of all other algorithms. For CTS, we also see that the regret increases as the number of pages ($V$) increases, it decreases as the number of recommended items ($K$) increases, and it increases as $\Delta$ decreases, which are very similar to the major observations that are made in \cite{kveton2015cascading}.

\subsection{Probabilistic Maximum Coverage Bandits}

Our experimental setup for this case is based on MovieLens dataset \cite{harper2015movielens} as in \cite{saritac2017combinatorial}.\footnote{While the probabilistic maximum coverage problem is NP-hard, here we focus on a small-scale problem and use an exact computation oracle.} The dataset contains 20 million movie ratings that are assigned between January 1995 and March 2015. Out of this, we only use the ones that are assigned between March 2014 and March 2015. In the experiments, the recommender chooses $K=3$ movies out of $V=30$ movies, which include $10$ of the most rated movies, $10$ of the least rated movies and $10$ randomly selected movies from the dataset. These $30$ movies are rated by $W=57369$ users. 

In total, there are $20$ genres in the dataset. Each movie belongs to at least one genre. We take genre information into account to define attraction probabilities. For this, we create a $20$-dimensional vector $\bm{g_i}$ for each movie $i\in[V]$, where $g_{ik}=1$ if the movie belongs to genre $k$ and $0$ otherwise. Using these vectors, we calculate a genre preference vector $\bm{u_j}$ for each user $j\in[W]$ as
\begin{align*}
    \bm{u_j} = \frac{\sum_{i\in\mathcal{V}_j}\bm{g_i}}{|\mathcal{V}_j|} + \bm{\epsilon_j}
\end{align*}
where $\mathcal{V}_j$ is the set of movies that user $j$ rated and $\bm{\epsilon_j}$ is a random vector such that $\epsilon_{jk} = |\chi_{jk}|$ for $\chi_{jk} \sim \mathcal{N}(0,0.05)$. The noise $\bm{\epsilon_j}$ is introduced to model exploratory behavior of the user. Finally, defining $\bm{\hat{g}_i}=\bm{g_i}/\|\bm{g_i}\|$ and $\bm{\hat{u}_j}=\bm{u_j}/\|\bm{u_j}\|$ as the normalized versions of the vectors we have defined, the attraction probabilities are calculated as
\begin{align*}
    p_{i,j} = 0.2 \times \frac{\langle\bm{\hat{g}_i},\bm{\hat{u}_j}\rangle r_i}{\max_{i\in[V]}r_i}
\end{align*}
where $r_i$ is the average rating of movie $i$.

We run both CTS and CUCB for $1000$ rounds, and report their regrets averaged over $10$ runs in Fig.~\ref{fig:movielens}, where error bars represent standard deviation of the regret (multiplied by 100 for visibility). We consider two cases with $p^*=0.01$ and $p^*=0.05$. For both cases, CTS significantly outperforms CUCB by achieving a final regret that is no more than $9\%$ of the final regret of CUCB.

\begin{figure}
	\centering	
    \includegraphics[height=2.1in]{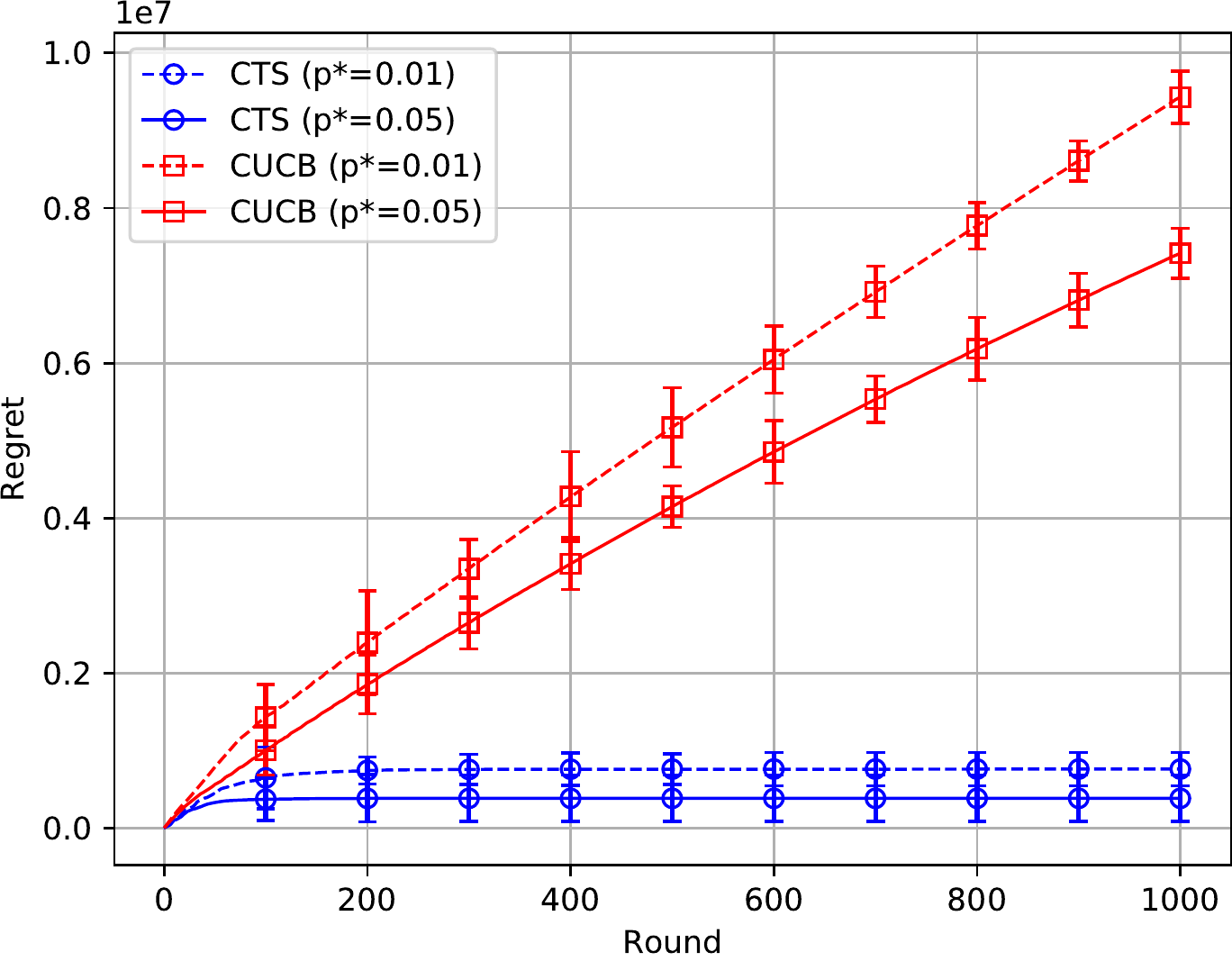}
	\caption{Regrets of CTS and CUCB for the probabilistic maximum coverage bandit problem.}
	\label{fig:movielens}
\end{figure}

\subsection{Influence Maximization Bandits}

We consider a directed version of the Facebook network dataset \cite{leskovec2012learning} that consists of $15$k edges and $3120$ nodes. Since, the dataset does not contain influence probabilities, we artificially generate them by setting  $p_{i,j}=1/|\mathcal{V}_i|$ where $\mathcal{V}_i$ represents the set of outgoing neighbors of node $i$. We assume that in each round the learner selects a seed set of $K=30$ nodes and this set forms the selected super arm. Moreover, we assume that the influence propagates---starting from the seed set---according to the independent cascade model \cite{kempe2003maximizing}, which is one of the most widely used influence propagation models. We adopt the edge-level feedback model in which the learner both observes the set of influenced nodes and the influence outcomes of the outgoing edges of these nodes. 

Since the problem itself is NP-hard, an exact computation oracle is computationally infeasible for the given graph size. Nevertheless, many computationally efficient approximation algorithms exist for the influence maximization problem (see e.g., CELF in \cite{leskovec2007cost}, and TIM and TIM+ in \cite{tang2014influence}). Due to its computational efficiency and good performance in practice, we set the learner to use TIM+ as the approximation oracle. When given as input an influence graph with $n$ nodes and $m$ edges, the influence probabilities on these edges and parameters $\varepsilon$ and $\ell$, TIM+ is guaranteed to return an $\alpha = (1-1/e-\varepsilon)$-approximate solution with probability at least $\beta = 1-3n^{-\ell}$ and with time complexity $O( (K+\ell) (n+m) \log n /\varepsilon^2)$. For all experiments, we set $\varepsilon=0.1$ and $\ell=1$. Since the learner uses an approximation oracle, instead of the regret given in \eqref{eqn:regret} we consider the $(\alpha,\beta)$-approximation regret as given in \cite{chen2016combinatorial} in the remainder of this section.

We run both CTS and CUCB for $5000$ rounds and report their regrets averaged over $10$ runs in Fig.~\ref{fig:facebook}. Here, error bars represent standard deviation of the regret multiplied by $10$ for visibility. Note that in these simulations, we consider the realized regret of the learner's actions instead of the expected regret as we do in the other experiments. This is once again due to the complexity of the problem and the difficulty in calculating expected regret. Again, it is observed that CTS significantly outperforms CUCB by achieving a final regret that is no more than $16\%$ of the final regret of CUCB. Relatively bad performance of CUCB is due to the fact that the considered time horizon is not long enough for CUCB to efficiently explore all base arms. It is observed that the UCBs of many base arms remain above $1$ even at the end of $5000$ rounds. As an algorithm that is based on the principle of optimism in the face of uncertainty, CUCB's performance completely depends on the confidence sets it uses to calculate the UCB indices, and this example shows that these confidence sets are not tight enough to guarantee fast convergence.

\begin{figure}
	\centering
    \includegraphics[height=2.1in]{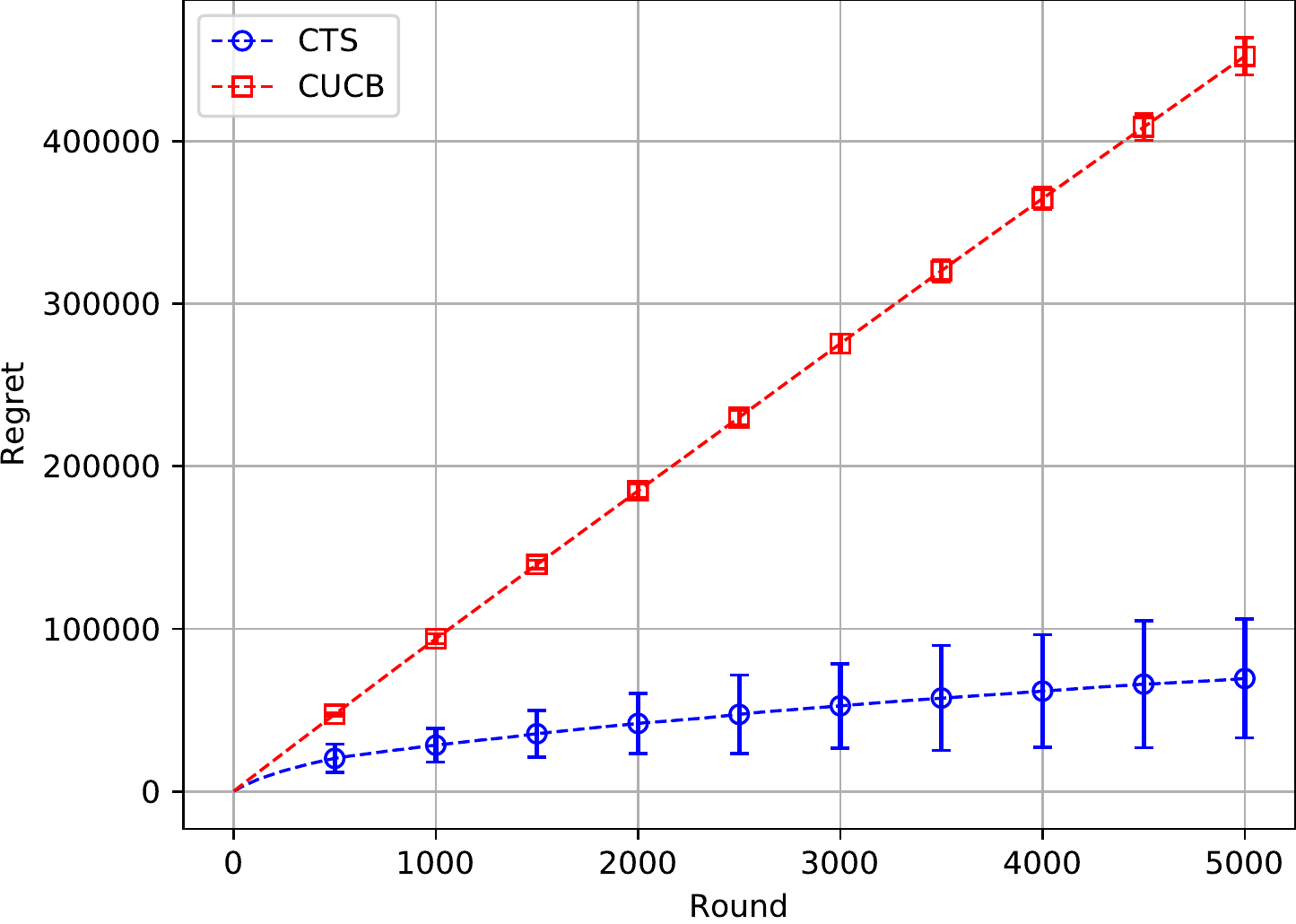}
	\caption{Regrets of CTS and CUCB for the influence maximization bandit problem.}
	\label{fig:facebook}
\end{figure}

\section{Conclusion}
\label{sec:conclusion}
\reva{We analyzed the regret of CTS for CMAB-PTA and proved (i) an order optimal gap-dependent regret bound when the expected reward function is Lipschitz continuous without assuming monotonicity, (ii) a Bayesian regret bound equivalent to an asymptotic gap-free regret bound assuming monotonicity, (iii) a Bayesian regret bound that is independent of triggering probabilities under the triggering modulated Lipschitz continuity assumption, and (iv) an improved regret bound that is independent of the time horizon for the special case when the triggering probabilities are strictly positive.}

\bibliographystyle{IEEEtran}
\bibliography{references}

%
%
%

\end{document}